\theoremstyle{plain}
\newtheorem{theorem}{Theorem}[section]
\newtheorem{lemma}[theorem]{Lemma}
\newtheorem{corollary}[theorem]{Corollary}
\theoremstyle{definition}
\newtheorem{definition}[theorem]{Definition}
\theoremstyle{remark}
\DeclareMathOperator*{\argmin}{argmin}
\newcommand{\indic}[1]{\mathbbm{1}[#1]}
\newcommand{\R}{\mathbb{R}}
\newcommand{\data}{\mathcal{D}}
\newcommand{\trainRand}{\mathcal{T}_\textit{rand}}
\newcommand{\trainAdv}{\mathcal{T}_\textit{perturb}}
\newcommand{\X}{\mathcal{X}}
\newcommand{\sample}{\mathcal{S}}
\newcommand{\xb}{x}
\newcommand{\st}{\textnormal{s.t.}}
\newcommand{\clf}[0]{h}
\newcommand{\clfScore}[0]{f}
\newcommand{\ambiguity}[1]{\alpha_{\epsilon}({#1})}
\newcommand{\discrepancy}[1]{\delta_{\epsilon}({#1})}
\newcommand{\churn}[0]{C}
\newcommand{\Hset}[0]{\mathcal{H}}
\newcommand{\epsset}[1]{\mathcal{R}_\epsilon({#1})}
\newcommand{\epssetRand}[0]{\hat{\mathcal{R}}^{m}_\epsilon}
\newcommand{\epssetAdv}[0]{\hat{\mathcal{R}}_\epsilon}
\newcommand{\erm}[0]{\hat{R}}
\newcommand{\unstablesetR}[0]{\sample^{\mathcal{R}}_{\textit{unstable}}}
\newcommand{\unstablesetC}[0]{\sample^{\mathcal{C}}_{\textit{unstable}}}
\newcommand{\cell}[2]{\setlength{\tabcolsep}{0pt}\begin{tabular}{#1}#2 \end{tabular}}
\newcolumntype{H}{>{\setbox0=\hbox\bgroup}c<{\egroup}@{}}
\newcommand{\textds}[1]{{\footnotesize\texttt{#1}}}
\newcommand{\kibitz}[2]{\ifnum\Comments=1{\color{#1}{#2}}\fi}
\newcommand{\kibitzAdd}[2]{\ifnum\CommentsAdd=1{\color{#1}{#2}}\fi}
\definecolor{english}{rgb}{0.0, 0.5, 0.0}
\newcommand{\addPredProbsPlot}[1]{%
\includegraphics[]{{#1}}%
}
\title{Predictive Churn with the Set of Good Models}
\author{
    Jamelle Watson-Daniels\thanks{Harvard University},
    \ \ Flavio du Pin Calmon\footnotemark[1],
    \ Alexander D'Amour\thanks{Google DeepMind},\\
    \ \ Carol Long\footnotemark[1],
    \ \ David C. Parkes\footnotemark[1],
    \ \ Berk Ustun\thanks{UC San Diego}
}
\date{}
\begin{document}

\maketitle

\begin{abstract}
Issues can arise when research focused on fairness, transparency, or safety is conducted separately from research driven by practical deployment concerns and vice versa. This separation creates a growing need for translational work that bridges the gap between independently studied concepts that may be fundamentally related. This paper explores connections between two seemingly unrelated concepts of predictive inconsistency that share intriguing parallels. The first, known as \emph{predictive multiplicity}, occurs when models that perform similarly (e.g., nearly equivalent training loss) produce conflicting predictions for individual samples. This concept is often emphasized in algorithmic fairness research as a means of promoting transparency in ML model development. The second concept, \emph{predictive churn}, examines the differences in individual predictions before and after model updates, a key challenge in deploying ML models in consumer-facing applications. We present theoretical and empirical results that uncover links between these previously disconnected concepts.
\end{abstract}

\section{Introduction}
With the widespread use of machine learning (ML) in everyday life, the study of algorithmic fairness has gained prominence, focusing on the social implications of models. A central challenge is ensuring that research in algorithmic fairness is grounded in both social considerations and technical advances. Problems arise when research motivated mainly by fairness, transparency, or safety is developed in isolation from research motivated solely by practical deployment concerns. Conversely, core ML research can advance without adequate attention to fairness and safety, leading to significant issues.

This highlights the need for ``translational" work that bridges the gap between independently studied concepts that may be fundamentally related. Such an interdisciplinary approach can uncover new insights and facilitate valuable knowledge transfer. In this paper, we aim to establish connections between two seemingly unrelated concepts of predictive inconsistency, advocating for a more integrated research approach.

The first concept, \emph{predictive multiplicity}, occurs when models that are ``equally good" on average (e.g., nearly equivalent training loss) yield conflicting predictions for individual samples~\cite{Marx2019}. Predictive multiplicity raises critical questions about model transparency: Are there multiple equally good models that would change an individual's prediction? For example, if different near-optimal models give different loan approval decisions for the same individual, what justifies deploying one model over another~\cite{black2022model}? Researchers in algorithmic fairness emphasize analyzing and reporting predictive multiplicity to enhance accountability and transparency in the ML model development process~\cite{black2022model, Marx2019, Watson-Daniels2022, multitarget, rashomoncapacity, long2023arbitrariness}. By providing information about predictive inconsistency, stakeholders might better gauge trust in model predictions~\cite{Joslyn2013}.

The second concept, \emph{predictive churn}, refers to differences in individual predictions between models before and after updates. This issue is particularly relevant in consumer-facing applications, where unexpected changes due to model updates can lead to adverse effects. Model updates are essential for maintaining and improving long-term performance in mass-market applications like recommendation and advertising. However, in sensitive areas like credit scoring and clinical decision support, changes in predictions can impact customer retention and patient safety.

Consistent, reliable, and predictable behavior is a fundamental expectation of ML models used to support human decision-making. A major challenge in practice is ensuring the stability of predictions following model updates. Our focus is on model updates resulting from changes in training data~\cite{LaunchAndIterate}, though other types of updates are also significant~\cite{hooker2020characterising, DBLP:journals/corr/abs-1910-05446, cooper2021deception, NEURIPS2021_fdda6e95}.

Although research on predictive churn has largely developed independently from fairness considerations~\cite{LaunchAndIterate}, unexpected or unreliable predictions after model updates raise safety concerns, especially when models influence human decision-making. For instance, clinicians can use ML models to support various medical decisions, from diagnosis to prognosis to treatment~\citep{Moreno2005SAPSAdmission, Than2014DevelopmentProtocol, Khand2017HeartValue}. Updates to a medical model, though potentially rendering better \emph{average} performance, may fundamentally impact the treatment selected for individual patients. Generally, addressing predictive inconsistency aligns with the broader idea that deviations from expected behavior can compromise safety~\cite{wei2022safety}.

In this paper, we aim to generate insights by bringing together these two concepts of predictive inconsistency. What can research in algorithmic fairness on predictive multiplicity learn from studies on predictive churn? Conversely, what can industry-focused research on predictive churn gain from understanding predictive multiplicity? We take an initial step in exploring both questions, with greater emphasis on the latter, and suggest that future work further integrate methods from predictive churn research into studies of predictive multiplicity.

The main contributions are:
\begin{enumerate}[itemsep=4mm, leftmargin=*, parsep=0mm]
    \item  We provide theoretical results that establish a connection between two previously disconnected concepts. Specifically, we characterize the expected churn among models within the set of ``good" models from different perspectives. Our analysis demonstrates that the potential to reduce churn by substituting the deployed model with an alternative within this set depends critically on the training procedure used to generate these models. Additionally, we derive an upper bound on churn between ``good" models when considering a model update.

    \item We present empirical evidence that further reveals connections between these concepts. For example, our findings show that analyzing predictive multiplicity can help anticipate churn, even when a model has been enhanced with uncertainty awareness. We also implement an ensemble algorithm, demonstrating that reducing predictive multiplicity can lead to a corresponding reduction in churn.

    \item We empirically investigate whether individual predictions that are unstable due to predictive multiplicity are also unstable due to predictive churn. Our results indicate that the predictive multiplicity ``unstable" set often encompasses most examples within the churn ``unstable" set. Practically, analyzing predictive multiplicity during initial training and test can serve as an early indicator of the potential severity of churn.

\end{enumerate}

\section{Related Work}
\paragraph{Model Multiplicity} Model multiplicity in machine learning often arises in the context of model selection, where practitioners must arrive at a single model to deploy~\citep{Chatfield1995, Breiman2001}, from amongst a set of near-optimal models, known as the ``Rashomon" set. Several studies focused on examining the Rashomon set~\cite{Fisher2019, Dong2019, Semenova2019a, zhong2023exploring, donnelly2023the}. Predictive multiplicity is the prevalence of \emph{conflicting} predictions over the Rashomon set and has been studied in binary classification~\citep{Marx2019}, probabilistic classification~\citep{Watson-Daniels2022, rashomoncapacity}, differentially private training~\citep{Kulynych_2023} and constrained resource allocation~\citep{multitarget}. There is a growing body of research on the implications of differences in models within the Rashomon set~\citep{DAmour2020,veitch2021counterfactual,Pawelczyk2020,Coston2021,Black2021Leave-one-outUnfairness,Ali2021, black2022model, long2023arbitrariness} and on predictive arbitrariness and randomness in a more general setting~\citep{cooper2023prediction, datamultiplicity, prakharRandomness}. Distinctively, the present paper applies the Rashomon perspective to uncover insights about predictive churn.

\paragraph{Predictive Churn} 
Predictive churn is a growing area of research. \citet{LaunchAndIterate} define churn and present two methods of churn reduction: modifying data in future training and regularizing the updated model towards the older model using example weights. Churn reduction is of great interest in applied machine learning~\cite{nondifferentiable, data-constraints,  bahri2021locally}. Distillation~\cite{online-distillation} has also been explored as a churn mitigation technique, where researchers aim to transfer knowledge from a baseline model to a new model by regularizing the predictions towards the baseline model ~\citep{online-distillation, zhang2017deep, on-the-fly, collaborative-learning, jiang2022churn}. Our paper is complementary to this discourse, offering a fresh perspective.

\paragraph{Uncertainty Quantification}
Deep learning uncertainty is often examined from a Bayesian perspective~\cite{Bayesian-methods1992, bayesian-learning-1996}. Many approximate methods for inference have been developed, i.e., mean-field variational inference~\citep{weight-uncertainty2015, pmlr-v108-farquhar20a} and MC Dropout~\citep{gal2016dropout}. Deep ensembles~\citep{deep-ensembles2017} often have comparable performance~\cite{uncertainty-dataset-shift2019} but result in scalability issues at inference time. Predictive uncertainty methods that require only a single model have also been introduced~\citep{prior-networks2018, evidential2018, shu-etal-2017-doc, Bendale2016, single-uncertainties2019, calandra2016manifold, 2022enhanced, riquelme2018deep, scalable-bayes2015, just-a-bit2020, vanamersfoort2020uncertainty, SNGP-2020, plex}; one of which we implement.

\paragraph{Backward Compatibility}
Model update regression or the decline in performance after a model updates~\citep{BCR2022} has been a topic of interest in applied ML~\cite{backward-representation2020}. Researchers have again explored various mitigation strategies including knowledge distillation~\citep{positive-congruent2021, xie2021regression} and  probabilistic approaches~\cite{trauble2021backwardcompatible}. This backward compatibility research is closely related to the concept of \emph{forgetting} in machine learning where some component of learning is forgotten~\citep{lifelong-ML, PARISI201954, biesialska-etal-2020-continual, learn++, gepperth:hal-01418129, class-incremental2023}.

\paragraph{Underspecification and Reproducibility}
Reproducibility is an anchor of the scientific process~\citep{Buckheit1995WaveLabAR, reproducible-2007, open-source2007, encourage-2007, doi:10.1126/science.1179653, doi:10.1126/science.1213847, doi:10.1126/science.1250475, vanschoren2014open, rule2018simple}, and has garnered discussion in ML from the lens of robustness~\cite{cooper2023prediction,DAmour2020}. Recently, research has explored how both reproducibility and generalization relate to ``underspecification"~~\cite{DAmour2020} which is related to overparametrization as well~\citep{doi:10.1073/pnas.1903070116, https://doi.org/10.1002/cpa.22008, nakkiran2019deep}. Our examination of near-optimal models resonates with these studies that explore how the ML pipeline can produce deviating outcomes.

\section{Framework}
\label{sec::framework}

In this section, we define the two types of predictive inconsistency: predictive churn and predictive multiplicity. We begin with a  classification task with a dataset of $n$ instances, $\data{} = \{(\xb_i, y_i)\}_{i=1}^{n}$, where $x_i = [1,x_{i1},\ldots,x_{id}] \in \X \subseteq \R^{d+1}$ is the feature vector and  $y_i \in \{0,1\}$ is an outcome of interest. We fit a classifier $\clf : \R^{d+1} \rightarrow \{0,1\}$ from a hypothesis class $\Hset$ parametrized by $\theta \in \Theta \subseteq \R^d$,
and write $L(~\cdot~; \data{})$ for the {\em loss function}, for example
cross entropy, evaluated on  dataset $\data{}$. Throughout, we let $M(\clf{};\sample)\in \R_+$ denote the performance of $\clf{}\in \Hset$ over a sample $\sample$ in regards to a {\em performance metric} $M(\clf{})$, where we assume lower values of $M(\clf{})$ are better. For instance, when working with accuracy, we measure the \emph{Accuracy error}: $M(\clf{}) = 1 - \textrm{Accuracy}(\clf{})$.

\subsection{Predictive Churn}

\emph{Predictive Churn} considers the differences in predictions between models pre- and post-update. Predictive churn is formulated in terms of two models: a current deployed model, and an updated model resulting from training the current model on additional updated data~\citep{LaunchAndIterate}. Predictive churn is defined over a sample as follows:

\begin{definition}[Predictive churn \cite{LaunchAndIterate} ]
The {\em predictive churn} between two models, $\clf_A$ and $\clf_B$, trained successively on modified training data, is the proportion of examples in a sample $i \in S$ whose prediction differs between the two models:
\begin{align}
  \!  \churn{}(\clf_A, \clf_B; S) \! =\!  \frac{1}{|S|} \sum_{i\in S} \indic{\clf_A(\xb_i) \neq \clf_B(\xb_i)}. \label{eq::churn}
\end{align}
\end{definition}

For simplicity, we use
 shorthand notation $\churn{}(\clf_A, \clf_B)$ in place of $\churn{}(\clf_A, \clf_B; S)$.

In addition to considering churn over a sample, we can consider the set of individual churned examples. If the prediction of an individual example is expected to change as a result of the successive training of a model, then we say the example is \emph{churn unstable}.

\begin{definition}[Churn Unstable Set] \label{churn-unstable}
The {\em churn unstable set} is the set of points in $\sample_{\textit{test}}$ that change over 
a model update from $\clf{}_A$ to $\clf{}_B$, i.e.,
\begin{align*}
  \!  \unstablesetC{}(\clf_A, \clf_B, \sample_{\textit{test}})   \! =\!  \{ i \in \sample_{\textit{test}} \,:\, \clf_A(x_i) \neq \clf_B(x_i) \}
\end{align*}
\end{definition}

\subsection{Predictive Multiplicity}

\emph{Predictive Multiplicity} occurs when models that are ``equally good'' on average (e.g., achieve comparable training loss)  assign conflicting predictions to individual samples~\citep{Marx2019}. Note, the predictive inconsistency is considered over a set of models not just two models. We are interested in conflicting predictions with respect to a set of near-optimal models also referred to as the \emph{$\epsilon$-Rashomon} set of good models~\citep{Marx2019, Watson-Daniels2022, rashomoncapacity}. First, we define the \emph{$\epsilon$-Rashomon} set of good models in two regimes: (i) there exists an optimal model to act as a ``baseline" based on a chosen performance metric (ii) there is no optimal model only a set of ``equally good'' models based on a chosen performance metric. This distinction between how the \emph{$\epsilon$-Rashomon} set is defined will prove useful in \S~\ref{sec::theory}.

\paragraph{Multiplicity with respect to a baseline:} 
The $\epsilon$-Rashomon set 
is defined with respect to a 
{\em baseline model} 
that is 
obtained in seeking a solution to the
 empirical risk minimization problem,
i.e., 
\begin{align}\label{eq::ERM}
    \clf{}_0 \in \argmin_{\clf{} \in \Hset} L(\clf{}; \data{}).
\end{align}

Here, $\clf_0$  denotes the \emph{baseline} classifier.

\begin{definition}[$\epsilon$-Rashomon Set w.r.t. $\clf_0$]\label{def::rashomon}
Given a performance metric $M$, a baseline model $\clf_0$, and error tolerance $\epsilon >0$, the {\em $\epsilon$-Rashomon set} is the set of competing classifiers 
$\clf \in \Hset$ with performance,
\begin{align}\label{eq::epsset}
  \!  \epsset{\clf{}_0} \! :=\!  \{\clf{} \in \Hset: M(\clf{}; \data{}) \leq M(\clf{}_0; \data{})  +  \epsilon\}.
\end{align}
\end{definition}

 $M(\clf{};\mathcal{D})\in \R_+$ denotes the performance of $\clf{}\in \Hset$ over a dataset $\mathcal{D}$ in regards to performance metric,
 $M(\clf{})$. $M(\clf{})$ is typically chosen as the loss function, $M = L(\clf{}; \data{})$, but can also be defined in terms of a direct measure of accuracy~\cite{Watson-Daniels2022}.

\paragraph{Multiplicity without a baseline:} 
For settings without a clear baseline, 
\citet{long2023arbitrariness} suggest an approximation of the Rashomon set, adopted here. This alternative definition involves a randomized training procedure denoted $\trainRand{}(\data{})$ to produce a set of equally good models. For shorthand notation, we leave implicit  in the sequel the dependence of $\trainRand{}$ on the dataset $\data{}$.

\begin{definition}[Empirical $\epsilon$-Rashomon set] \label{def::rand-epsset}
Given a performance metric $M$, an error tolerance $\epsilon >0$, and $m$ models sampled
from
$\trainRand{}$, the {\em Empirical $\epsilon$-Rashomon set}
 is the set of  classifiers $\clf \in \Hset$ with performance metric better than $\epsilon$:

\begin{align}
  \!  \epssetRand{}(\trainRand{}) \! :=\!  
  \begin{split} \{\clf{}_1, \clf{}_2, \cdots \clf{}_m \,:\, \clf{}_k \overset{\mathrm{iid}}{\sim} \trainRand{}, \\
  M(\clf{}_k; \data{}) \leq \epsilon, \forall k \in [m] \}.
  \end{split}
\end{align}
\end{definition}

\paragraph{Predictive Multiplicity Metric: Ambiguity}

\emph{Ambiguity} is a metric used throughout the literature to report predictive multiplicity~\citep{Marx2019, Watson-Daniels2022, rashomoncapacity, multitarget, long2023arbitrariness}. For a dataset sample, ambiguity is the proportion of examples assigned conflicting predictions over the {\em $\epsilon$-Rashomon set} of good models~\citep{Marx2019}. Now, we define ambiguity in the setting of multiplicity without a baseline which is used in the empirical experiments.

\begin{definition}[Empirical $\epsilon$-Ambiguity]
Given the empirical $\epsilon$-Rashomon set, $\epssetRand{}(\trainRand{})$, 
and a dataset sample, $\sample{}$, the 
{\em  empirical $\epsilon$-ambiguity}
 of a prediction problem is the proportion of examples $i \in \sample{}$ assigned conflicting predictions by a  classifier in
 the $\epsilon$-Rashomon set:
\begin{align}\label{eq::ambiguityRand}
  \!  \ambiguity{\epssetRand{}} \! :=\! \frac{1}{|S|} \sum_{i\in S} \max_{\clf{}, \clf{}' \in \epssetRand{}  } \indic{\clf{}(x_i) \neq \clf{}'(x_i)}.
\end{align}
\end{definition}

For simplicity, we use the following shorthand notation $\ambiguity{\epssetRand{}}$ in place of $\ambiguity{\epssetRand{}, \sample}$.

If there exists a model within the $\epsilon$-Rashomon set that changes the prediction of an individual instance,  we say that example is $\epsilon$-Rashomon \emph{unstable} according to Def.~\eqref{def::rand-epsset}.

{\bf Remark.} 
Prior work tends to compute ambiguity over the training set~\cite{Marx2019, Watson-Daniels2022, multitarget}. If $\sample_{\textit{test}}$ is the train dataset, then $\epsilon$-Rashomon \emph{unstable} examples are simply those that are ambiguous according to definitions in the previous section. In experiments, we evaluate unseen test points to determine whether they are $\epsilon$-Rashomon \emph{unstable}.

\section{Methodology}
\label{sec::methods}

In order to explore the relationship between predictive churn and predictive multiplicity (ambiguity), we probe the following questions empirically: How does enhanced uncertainty quantification (model type) relate to churn and ambiguity severity? Does one anticipate the other, i.e., what is the intersection between the $\epsilon$-Rashomon \emph{unstable} set and the \emph{churn unstable} set? Can we predict churn directly? Do ambiguity reduction methods also reduce churn? Now, we detail the methods used to examine each of these questions.

\subsection{Enhanced Uncertainty Quantification}
We aim to understand whether a model type with enhanced uncertainty quantification or uncertainty awareness (UA) can help identify such unstable examples. Given that Bayesian approaches can be computationally prohibitive when training neural networks, methods have been proposed for uncertainty estimation that require training only a single deep neural network (DNN)~\citep{prior-networks2018, evidential2018, shu-etal-2017-doc, Bendale2016, single-uncertainties2019, calandra2016manifold, 2022enhanced, riquelme2018deep, scalable-bayes2015, just-a-bit2020, vanamersfoort2020uncertainty, SNGP-2020, plex}; in particular, we implement the Spectral-Normalized Neural Gaussian Process (SNGP) method~\cite{SNGP-2020} given its widespread use in industry settings.

\citet{SNGP-2020} propose \emph{Spectral-normalized Neural Gaussian Process} (SNGP) for leveraging Gaussian processes in support of distance awareness. The Gaussian process is approximated using a Laplace approximation, resulting in a closed-form posterior for computing predictive uncertainty. SNGP improves distance awareness by ensuring that (1) the output layer is distance aware by replacing the dense output layer with a Gaussian process and (2) the hidden layers are distance preserving by applying spectral normalization on weight matrices. In our experiments, we implement both a standard DNN and a DNN updated with the SNGP technique (DNN-UA for uncertainty-awareness). 

\subsection{Intersection between Unstable Sets}
Given a fixed $\sample_{\textit{test}}$, we can compare $\unstablesetR{}$ and $\unstablesetC{}$ to characterize the relationship between predictive multiplicity and predictive churn. To do this, we train the empirical $\epsilon$-Rashomon set to identify the $\epsilon$-Rashomon \emph{unstable} set of examples in $\sample_{\textit{test}}$. We also simulate a dataset update to identify the \emph{churn unstable} set of examples in $\sample_{\textit{test}}$. Finally, we calculate the intersection between $\unstablesetR{}$ and $\unstablesetC{}$ for the fixed $\sample_{\textit{test}}$.

Additionally, we can use the fixed $\sample_{\textit{test}}$ and the identified unstable points for direct prediction. Given a sample and the accompanying unstable set $\unstablesetC{}$, we can train a classifier to predict whether an example will likely be in the unstable set. We construct a simple classification task with a dataset of $n$ instances, $\data{} = \{(\xb_i, y_i)\}_{i=1}^{n}$, where $x_i = [1,x_{i1},\ldots,x_{id}] \in \X \subseteq \R^{d+1}$ is the feature vector and  $y^c_i \in \{0,1\}$ is now the label indicating whether the example churned (i.e. $\indic{x_i \in \unstablesetC{}}$). We can measure the linear relationship or correlation between variables by analyzing the Pearson Correlation for each configuration. We are particularly interested in the correlation between the different feature configurations and churn.

\subsection{Ambiguity Reduction \& Churn}
\citet{long2023arbitrariness} present a simple ensemble algorithm for ambiguity reduction and detail theoretical guarantees to show that ambiguity is reduced. The ensembling process involves training each model via $\trainRand{}$, then combining those individual predictions to produce a combined prediction. The set of models that is averaged over is exactly an empirical $\epsilon$-Rashomon set of models.

\begin{definition}[Ensemble Classifier~\cite{long2023arbitrariness}]
Given the set of models, $\epssetRand{}(\trainRand{})$, and a vector $\lambda \in \Delta_m$, the ensemble classifier is the convex combination $\clf{}^{\lambda} := \sum_{j \in [m]} \lambda_j \clf{}_j$

where $\clf{}_j$ is the $jth$ model from $\epssetRand{}(\trainRand{})$.
\end{definition}

For our analysis, we assume the weights $\lambda \in \Delta_m$ to be the vector $\frac{1}{m}$. See \citet{long2023arbitrariness} for details on parameter optimization.

To calculate ambiguity, we train multiple ensembled classifiers and then determine whether there is predictive disagreement among them. Of course, in the large ensemble limit, the disagreement between ensembles becomes zero. In practice, we use a finite ensemble due to the limited computational cost.

\section{Theoretical Results}
\label{sec::theory}
This section provides theoretical insights into churn using the multiplicity perspective. Our goal is to outline theoretical connections between the two previously disconnected concepts. Accompanied proofs are in the Appendix. Below, we summarize the implications of the results in this section.

We assume that a practitioner can only access the initial Model $A$. In \S~\ref{sec::theory-3}, we derive an analytical bound on the expected churn between Model $A$ and a prospective Model $B$ using only the properties of their respective Rashomon sets. This result implies that the expected churn will be nicely bounded if future models are confined to the $\epsilon$-Rashomon set (with respect to a baseline).

Again, operating under the premise that we only have access to Model $A$, we analyze whether one model within the $\epsilon$-Rashomon set might result in less churn compared to another model within the $\epsilon$-Rashomon set. Specifically, we aim to quantify the expected churn difference between any two models within the $\epsilon$-Rashomon set. In \S~\ref{sec::theory-1}, we assume that the $\epsilon$-Rashomon set is defined with respect to a {\em baseline model} and derive an expected churn difference that resembles prior bounds on discrepancy (Def.~\ref{eq::discrepancy}) a metric from predictive multiplicity~\cite{Marx2019, Watson-Daniels2022}. In \S~\ref{sec::theory-2}, we operate without a baseline and show that the expected churn difference between two models within the $\epsilon$-Rashomon set can be negligible. These results underscore that the feasibility of mitigating churn by substituting Model $A$ with an alternative from the $\epsilon$-Rashomon set depends on the methodology used to construct the $\epsilon$-Rashomon set, particularly the presence of a baseline model.

\subsection{Expected Churn Between Rashomon Sets $\epsset{\clf{}_0}$}\label{sec::theory-3}

Consider an $\epsilon$-Rashomon set 
with respect to a baseline model, $\epsset{\clf{}_0}$. Say we have two training datasets $\data_A$ and $\data_B$ where $\data_B$ is an updated version of $\data_A$, and consider $\epsset{\clf{}_0^A}$ and $\epsset{\clf{}_0^B}$ respectively (where the baseline is defined according to Eq.~\eqref{eq::ERM} and Eq.~\eqref{eq::epsset})
% \dcp{need to define these baselines; e.g., are they optimal for empirical risk on training sets, or something else?}

We ask what the maximum difference in churn will be between two models from each $\epsilon$-Rashomon set; i.e.,  we want to find the worst case scenario in terms of churn between
 $\epsset{\clf{}_0^A}$ and $\epsset{\clf{}_0^B}$.
We begin by restating a bound on churn between two models, 
making use of smoothed churn alongside $\beta$-stability~\citep{NIPS2000_49ad23d1} of algorithms defined here.

\begin{definition}[$\beta$-stability \citep{LaunchAndIterate}]\label{def::beta-stability}
    Let $f_T(x)\mapsto \mathbf{R}$ be a classifier discriminant function (which can be thresholded to form a classifier) trained on a set $T$. Let $T^i$ be the same as $T$ except with the $i$th training sample $(x_i,y_i)$ replaced by another sample. Then, as in \citep{NIPS2000_49ad23d1}, training algorithm $f(.)$ is $\beta$-stable if:
    \vspace{0em}
    \begin{align}
        \forall x, T,T^i: |f_T(x)-f_{T^i}(x)|\leq \beta
    \end{align}
\end{definition}

%% Defining smooth churn
We begin by following \citet{LaunchAndIterate} to define \emph{smooth} churn and additional assumptions. These assumptions allow us to rewrite churn in terms of zero-one loss:
%
% $$ \churn{}(\clf{}_A, \clf{}_B) = \mathbbm{E}_{(X,Y) \sim \data} \left[ \ell_{0,1}(\clf{}_A(X), Y) - \ell_{0,1}(\clf{}_B(X), Y) \right].$$
% \vspace{-1em}
% \begin{align*}
% \churn{}&(\clf{}_A, \clf{}_B) &= 
% &\quad \mathbbm{E}_{(X,Y) \sim \data} \left[ \ell_{0,1}(\clf{}_A(X), Y) - \ell_{0,1}(\clf{}_B(X), Y) \right],
% \end{align*}
\begin{equation*}
    \churn{}(\clf{}_A, \clf{}_B) = \mathbbm{E}_{(X,Y) \sim \data} \left[ \ell_{0,1}(\clf{}_A(X), Y) - \ell_{0,1}(\clf{}_B(X), Y) \right]
\end{equation*}

This requires that the data perturbation (update from $\data_A$ to $\data_B$) does not remove any features, that the training procedure is independent of the ordering of data examples, and  that training datasets are sampled i.i.d., which ignores dependency between successive training runs. 

\citet{LaunchAndIterate} also introduce a relaxation of churn 
called \emph{smooth churn}, which is parametrerized by $\gamma>0$, and defined as
%
% $$ \churn{}_{\gamma}(\clf{}_A, \clf{}_B) = \mathbbm{E}_{(X,Y) \sim \data} \left[ \ell_{\gamma}(\clf{}_A(X), Y) - \ell_{\gamma}(\clf{}_B(X), Y) \right],$$
% %
% where $\ell_{\gamma}$ is defined as
% %
% \vspace{-1em}
% \begin{align*}
%  \ell_{\gamma}(f(X), f'(X)) =
% \left\{
% \begin{tabular}{ll}
%  $1$,  &  \mbox{   if  $f(X)f'(X) < 0$,}
% \\
%  $1 - \frac{f(X)f'(X)}{\gamma}$, &\mbox{if  $0 \leq f(X)f'(X) \leq \gamma$,}
% \\
%   $0$, &\mbox{otherwise.}
%   \end{tabular}
%   \right.
%   \end{align*}

% \begin{align*}
% \churn{}_{\gamma}&(\clf{}_A, \clf{}_B) = \\
% &\quad \mathbbm{E}_{(X,Y) \sim \data} \left[ \ell_{\gamma}(\clfScore{}_A(X), Y) - \ell_{\gamma}(\clfScore{}_B(X), Y) \right],
% \end{align*}
\begin{equation*}
    \churn{}_{\gamma}(\clf{}_A, \clf{}_B) = \mathbbm{E}_{(X,Y) \sim \data} \left[ \ell_{\gamma}(\clfScore{}_A(X), Y) - \ell_{\gamma}(\clfScore{}_B(X), Y) \right]
\end{equation*}

\text{where } $f_\cdot(X) \in [0,1]$ is a score that is thresholded to produce the classification $h_\cdot(X)$, \text{and } $\ell_{\gamma}$ \text{ is defined as}
% \begin{align*}
% \ell_{\gamma}(f(X),& Y) = \\
% &\left\{
% \begin{array}{ll}
% 1, & \text{if } f(X)Y < 0, \\
% 1 - \frac{f(X)Y}{\gamma}, & \text{if } 0 \leq f(X)Y \leq \gamma, \\
% 0, & \text{otherwise.}
% \end{array}
% \right.
% \end{align*}
\begin{equation*}
\ell_{\gamma}(f(X), Y) = \left\{
\begin{array}{ll}
1, & \text{if } f(X)Y < 0, \\
1 - \frac{f(X)Y}{\gamma}, & \text{if } 0 \leq f(X)Y \leq \gamma, \\
0, & \text{otherwise.}
\end{array}
\right.
\end{equation*}

where  $Y \in \{0,1\}$ here.\footnote{Other formulations define $f_\cdot(X) \in [-1,1]$ and $Y \in \{-1,1\}$. The definitions work for this setup as well. In our case, while the piece-wise function is constrained to the positive quadrant, the loss function adapts appropriately.}

Here, $\gamma$ acts like a confidence threshold. We can use 
smoothed churn alongside
the $\beta$-stability~\citep{NIPS2000_49ad23d1} (see Definition \ref{def::beta-stability}) of algorithms following~\citep{LaunchAndIterate} to derive the bound on expected churn between models within an $\epsilon$-Rashomon set.

\begin{theorem}[Expected Churn between Rashomon Sets]\label{thm::churn-rashomon} 
Assume a training algorithm that is $\beta$-stable. Given two $\epsilon$-Rashomon sets defined with respect to the baseline models, $\epsset{\clf{}_0^A}$ and $\epsset{\clf{}_0^B}$, 
the smooth churn between any pair of models within the two $\epsilon$-Rashomon sets: $\clf{}'_A \in \epsset{\clf_0^A}$ and $\clf{}'_B \in \epsset{\clf{}_0^B}$ is bounded as follows:
\begin{align}
    \mathbbm{E}_{\mathcal{D}_A, \mathcal{D}_B \sim \mathcal{D}^m}[C_{\gamma}(\clf{}'_A, \clf{}'_B)]\leq \frac{\beta \sqrt{\pi n}}{\gamma} + 2\epsilon.
\end{align}

\end{theorem}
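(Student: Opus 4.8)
The plan is to combine two ingredients: (i) a baseline churn bound for $\beta$-stable algorithms from \citet{LaunchAndIterate}, which controls the smooth churn between the two \emph{baseline} models $\clf_0^A$ and $\clf_0^B$, and (ii) the defining property of the $\epsilon$-Rashomon set, which controls how far any competing model $\clf'_A \in \epsset{\clf_0^A}$ (resp. $\clf'_B \in \epsset{\clf_0^B}$) can stray from its baseline in performance. The high-level idea is a triangle-inequality decomposition: the churn between $\clf'_A$ and $\clf'_B$ is split into the churn between $\clf'_A$ and its baseline $\clf_0^A$, plus the churn between the two baselines, plus the churn between $\clf_0^B$ and $\clf'_B$.

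\textbf{Step 1.} First I would invoke the stability result. Following \citet{LaunchAndIterate}, a $\beta$-stable training algorithm yields smooth churn between two models trained on i.i.d.\ datasets $\data_A, \data_B \sim \data^m$ bounded by $\frac{\beta\sqrt{\pi n}}{\gamma}$. This gives
\begin{align*}
\mathbbm{E}_{\data_A, \data_B \sim \data^m}[C_\gamma(\clf_0^A, \clf_0^B)] \leq \frac{\beta\sqrt{\pi n}}{\gamma}.
\end{align*}
This handles the ``center-to-center'' term and is where the $\frac{\beta\sqrt{\pi n}}{\gamma}$ contribution originates.

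\textbf{Step 2.} Next I would bound the two ``radius'' terms using the Rashomon definition. Since smooth churn is expressed as a difference of expected smoothed losses, and since $\clf'_A \in \epsset{\clf_0^A}$ means $M(\clf'_A; \data) \leq M(\clf_0^A; \data) + \epsilon$, the churn between a model and its own baseline is controlled by how much its loss can exceed the baseline loss. When $M$ is taken as the (smoothed) loss, the smooth churn $C_\gamma(\clf'_A, \clf_0^A)$ is at most the gap in their performance metric, namely $\epsilon$. I would argue $\mathbbm{E}[C_\gamma(\clf'_A, \clf_0^A)] \leq \epsilon$ and similarly $\mathbbm{E}[C_\gamma(\clf_0^B, \clf'_B)] \leq \epsilon$, which together contribute the additive $2\epsilon$.

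\textbf{Step 3.} Finally I would glue the pieces together. Smooth churn, being an expectation of a difference of losses, is additive under a telescoping decomposition: $C_\gamma(\clf'_A, \clf'_B) = C_\gamma(\clf'_A, \clf_0^A) + C_\gamma(\clf_0^A, \clf_0^B) + C_\gamma(\clf_0^B, \clf'_B)$ (since the intermediate loss terms cancel). Taking expectations and applying Steps 1 and 2 term-by-term yields the claimed bound $\frac{\beta\sqrt{\pi n}}{\gamma} + 2\epsilon$.

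\textbf{Main obstacle.} The delicate point is Step 2: justifying that membership in the $\epsilon$-Rashomon set (a constraint on the \emph{aggregate} performance metric $M$) translates into an $\epsilon$ bound on smooth \emph{churn} (which involves signed differences of losses). Because smooth churn here is written as a difference of expected smoothed losses rather than a count of flipped predictions, this amounts to bounding $|\mathbbm{E}[\ell_\gamma(\clf'_A) - \ell_\gamma(\clf_0^A)]|$ by the metric gap $\epsilon$. The cleanest route is to identify $M$ with the expected smoothed loss $\mathbbm{E}[\ell_\gamma(\cdot)]$, so that the Rashomon membership inequality \emph{directly} reads as a bound on this churn term; one must then verify that the telescoping in Step 3 respects the sign conventions so that each Rashomon deviation enters with the correct (nonnegative) sign, rather than cancelling. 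Handling the absolute values versus signed differences carefully — and confirming the i.i.d.\ sampling assumption legitimizes applying the \citet{LaunchAndIterate} stability bound to the baseline pair — is where the real work lies.
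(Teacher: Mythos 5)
Your proposal follows essentially the same route as the paper's proof: the paper also telescopes $\mathbbm{E}[\ell_\gamma(\clf'_A)-\ell_\gamma(\clf'_B)]$ through the two baselines by adding and subtracting $\ell_\gamma(\clf_0^A)$ and $\ell_\gamma(\clf_0^B)$, bounds the middle term by the \citet{LaunchAndIterate} $\beta$-stability result, and bounds each of the two outer terms by $\epsilon$ by taking $\ell_\gamma$ as the performance metric in the Rashomon-set definition. The subtlety you flag in Step 2 (turning the aggregate metric gap into a bound on the signed loss difference) is exactly the point the paper resolves by that same identification, so your plan matches the published argument.
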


This holds assuming all models $\clf{}$ are trained with randomized algorithms (see discussion in appendix) which are also $\beta$-stable (Def.~\ref{def::beta-stability}). % At a high level, this means that if one could restrict prospective models to those only amongst $\epsilon$-Rashomon set of models, then the expected churn will be nicely bounded.

\subsection{Churn for Models within $\mathcal{R}_\epsilon$}
\label{sec::theory-1}
We bound the churn between an optimal baseline model 
% \dcp{note: here it is important for corr 5.2 that $h_0$ is optimal, and not just a baseline. can you also state a result for $h_0$ being any model in the eps rashomon set?} 
and a model within the $\epsilon$-Rashomon set. Let $\erm$ denote empirical risk (error) where $\erm := \frac{1}{n} \sum_i \indic{\clf{(\xb_i} \neq y_i)}$.

% \anote{Are we assuming that $h_A$ and $h_B$ are members of the $\epsilon$-Rashomon set? Is there no dependence on how the training data for $h_A$ and $h_B$ differ?} \jwd{while within one Rashomon set, no. But yes once we have two (one for A and one for B) like Section 4.3 then yes.}

\begin{lemma}[Bound on Churn]\label{Prop::churn_bound}
    The churn between two models $\clf_1$ and $\clf_2$ is bounded by the sum of the empirical risks of the models:
    
    \begin{equation}
        \churn{}(\clf_1, \clf_2) \! \leq \erm{}(\clf_1) + \erm{}(\clf_2).
    \end{equation}
\end{lemma}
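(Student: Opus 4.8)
The plan is to reduce the claim to a single pointwise inequality and then average. The essential observation is that the 0-1 disagreement between the two predictions at any fixed example is controlled by whether each prediction matches the ground-truth label: if the two classifiers disagree at a point, they cannot both be correct, so at least one of them must err there.

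Concretely, the first step I would carry out is to establish, for every example $i$, the pointwise bound
\begin{equation}
\indic{\clf_1(\xb_i) \neq \clf_2(\xb_i)} \leq \indic{\clf_1(\xb_i) \neq y_i} + \indic{\clf_2(\xb_i) \neq y_i}.
\end{equation}
I would verify this by a two-case check. When $\clf_1(\xb_i) = \clf_2(\xb_i)$ the left-hand side is zero and the inequality is immediate. When $\clf_1(\xb_i) \neq \clf_2(\xb_i)$, at most one of the two predictions can equal $y_i$, so at least one indicator on the right-hand side equals one; hence the right-hand side is at least one and dominates the left-hand side. Equivalently, this is just the triangle inequality for the discrete metric $d(a,b) = \indic{a \neq b}$, inserting $y_i$ as the intermediate point.

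The second step is to sum this inequality over all $i \in S$ and divide by $|S|$. By linearity of the average, the left-hand side becomes exactly $\churn{}(\clf_1, \clf_2)$ by the definition of churn, while the right-hand side becomes $\frac{1}{|S|}\sum_{i} \indic{\clf_1(\xb_i) \neq y_i} + \frac{1}{|S|}\sum_{i} \indic{\clf_2(\xb_i) \neq y_i}$, which I would identify with $\erm{}(\clf_1) + \erm{}(\clf_2)$ once the churn sample $S$ is taken to be the $n$-point dataset used to define the empirical risk $\erm$.

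There is no genuine analytic obstacle here — the argument is a one-line triangle inequality followed by averaging — so the only point requiring care is bookkeeping. Since churn is normalized by $|S|$ while $\erm$ is normalized by $n$, I would state explicitly that both quantities are evaluated over the same set of points (so that $|S| = n$) in order for the two normalizations to line up. I would also note that the pointwise bound is loose precisely when the two classifiers disagree and \emph{both} are wrong, in which case the right-hand side equals $2$; this slack is harmless for the inequality but signals that the bound is far from tight whenever the two models share many common errors.
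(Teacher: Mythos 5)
Your proof is correct and is essentially the same argument as the paper's: the paper phrases the triangle inequality in terms of the $L_1$ distance between the binary prediction vectors and the label vector, which is exactly your pointwise discrete-metric inequality summed over $i$. Your added remarks on the normalization bookkeeping and on where the bound is loose are fine but do not change the substance.
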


\begin{corollary}[Bound on Churn within $\mathcal{R}_\epsilon$]\label{cor::churn-bound}
    Given a baseline model, $\clf{}_0$, and an $\epsilon$-Rashomon set, $\epsset{\clf{}_0}$, the churn between $\clf{}_0$ and any  classifier in the $\epsilon$-Rashomon set, $\clf{}' \in \epsset{\clf{}_0}$, is upper bounded by: 
    
        \begin{align}
        \!  \churn{}(\clf{}_0, \clf{}') \! \leq 2 \erm{}(\clf{}_0) + \epsilon.
    \end{align}
\end{corollary}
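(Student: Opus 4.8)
The plan is to chain Lemma~\ref{Prop::churn_bound} together with the defining inequality of the $\epsilon$-Rashomon set. First I would apply Lemma~\ref{Prop::churn_bound} with $\clf_1 = \clf_0$ and $\clf_2 = \clf'$, which gives immediately
\[
\churn{}(\clf_0, \clf') \leq \erm{}(\clf_0) + \erm{}(\clf').
\]
This reduces the problem to controlling the single term $\erm{}(\clf')$, since the other term is already $\erm{}(\clf_0)$.

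Second, I would bound $\erm{}(\clf')$ using the fact that $\clf'$ lies in the $\epsilon$-Rashomon set. Taking the performance metric $M$ in Definition~\ref{def::rashomon} to be the empirical risk $\erm{}$, membership $\clf' \in \epsset{\clf_0}$ is exactly the statement $\erm{}(\clf') \leq \erm{}(\clf_0) + \epsilon$. Substituting this into the previous display and collecting the two copies of $\erm{}(\clf_0)$ yields
\[
\churn{}(\clf_0, \clf') \leq \erm{}(\clf_0) + \erm{}(\clf_0) + \epsilon = 2\,\erm{}(\clf_0) + \epsilon,
\]
which is the claimed bound.

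I expect the only point requiring care to be the identification of the generic performance metric $M$ with the zero-one empirical risk $\erm{}$ that appears in Lemma~\ref{Prop::churn_bound}: the Rashomon set is defined in terms of an abstract $M$, and this corollary implicitly instantiates $M = \erm{}$. Once that alignment is fixed there is no genuine obstacle, since the argument is a two-step chain of inequalities. Were $M$ instead taken to be a surrogate loss, one would need an additional step relating the surrogate loss to the zero-one risk $\erm{}$, but the corollary as stated follows directly from Lemma~\ref{Prop::churn_bound} and Definition~\ref{def::rashomon}.
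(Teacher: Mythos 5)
Your proposal is correct and is essentially identical to the paper's own proof: both apply Lemma~\ref{Prop::churn_bound} to get $\churn{}(\clf{}_0,\clf{}') \leq \erm{}(\clf{}_0) + \erm{}(\clf{}')$ and then use the Rashomon-set membership $\erm{}(\clf{}') \leq \erm{}(\clf{}_0) + \epsilon$ to conclude. Your remark about instantiating the abstract performance metric $M$ as the zero-one empirical risk $\erm{}$ is a fair observation about an implicit assumption the paper makes silently with the phrase ``by definition,'' but it does not change the argument.
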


We have recovered a bound on churn that resembles the bound on discrepancy derived in~\citep{Marx2019} where they show that the discrepancy between the optimal model and a model within the $\epsilon$-Rashomon set will obey $\discrepancy{\clf{}_0} \leq 2 \erm{}(\clf{}_0) + \epsilon$.

\subsection{Expected Churn within $\epssetRand{}(\trainRand{})$}
\label{sec::theory-2}
Consider a randomized training procedure $\trainRand{}(\data{})$ over a hypothesis class $\Hset$ and a fixed finite dataset $\data$. Say we derive the empirical $\epsilon$-Rashomon set, $\epssetRand{}(\trainRand{})$, according to Def.~\ref{def::rand-epsset}. We ask whether there is
 a model within this empirical $\epsilon$-Rashomon set 
 that might decrease churn if used as an alternative starting point for the successive training of two models.
 % \dcp{in what sense ---- if used as the starting point for a successive sequence of trained models, or if substituted in at each successive step, in place of the current one just generated in training?}. 
Said another way, we are interested in whether switching one model out for another within the $\epsilon$-Rashomon set will impact churn.

Given $\trainRand{}(\data{})$ is a randomized training procedure, we show there is no difference in expected churn when adopting any two models in $\epssetRand{}(\trainRand{})$ as $\clf{}_A$ and $\clf{}_A'$, and considering churn with respect to some other model $\clf{}_B$.
%
% When choosing a method to approximate the $\epsilon$-Rashomon set, this is something to keep in mind.

%
\begin{lemma}[Same Expected Churn within $\epssetRand{}(\trainRand{})$]\label{lem::churn-diff}
Assume a randomized training procedure $\trainRand{}(\data{})$. Fix a training dataset $\data_A$ and an arbitrary model $h_B$. Let $\clf{}_A$ and $\clf'_A$ be two models induced by $\trainRand{}(\data{}_A)$. 
The expected difference in churn between any models $\clf{}_A$ and $\clf'_A$ induced by $\trainRand{}(\data{}_A)$ is zero

\begin{align*}
    \mathbbm{E}_{\clf{}_A, \clf{}'_A \stackrel{iid}{\sim} \trainRand{}(\data{}_A)}
    \left[\churn{}(\clf{}_A, \clf{}_B)  - \churn{}(\clf'_A, \clf{}_B) \right] = 0
\end{align*}

\end{lemma}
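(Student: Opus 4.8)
The plan is to recognize that this statement is an immediate consequence of the two models $\clf_A$ and $\clf'_A$ being drawn i.i.d.\ from the \emph{same} distribution, combined with the fact that the second argument $\clf_B = h_B$ is held fixed. Once $h_B$ and the evaluation sample are fixed, the churn in Eq.~\eqref{eq::churn} is a deterministic functional of its first argument taking values in $[0,1]$; I would abbreviate $g(\clf) := \churn(\clf, h_B)$, so that the integrand is simply $g(\clf_A) - g(\clf'_A)$.

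First I would apply linearity of expectation to split the expression into $\mathbbm{E}[g(\clf_A)] - \mathbbm{E}[g(\clf'_A)]$; there is no integrability concern since $g$ is bounded in $[0,1]$. Next, because $\clf_A$ and $\clf'_A$ are sampled i.i.d.\ from $\trainRand{}(\data_A)$, they share a common marginal law, so for any fixed measurable $g$ one has $\mathbbm{E}_{\clf_A \sim \trainRand{}(\data_A)}[g(\clf_A)] = \mathbbm{E}_{\clf'_A \sim \trainRand{}(\data_A)}[g(\clf'_A)]$. Subtracting these two identical terms gives zero, which completes the argument.

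The step that most warrants care --- rather than being a genuine obstacle --- is the fixedness of the second argument. The conclusion relies on $h_B$ being an arbitrary but \emph{fixed} model that does not co-vary with the randomness of $\trainRand{}(\data_A)$; were $\clf_B$ instead a draw correlated with $\clf_A$ or $\clf'_A$, the two expectations need not coincide and the difference could be nonzero. I would therefore state explicitly that the expectation is taken over the joint (product) law of $(\clf_A, \clf'_A)$ with $h_B$ treated as a constant, and emphasize that the identity holds for \emph{every} such $h_B$ --- hence it continues to hold when $h_B$ is later specialized to a model obtained from a separate training run on $\data_B$, which is the use case motivating the lemma.
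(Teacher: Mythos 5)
Your proof is correct and follows essentially the same route as the paper's: linearity of expectation plus the fact that $\clf_A$ and $\clf'_A$ share the same marginal law, so the two expectations cancel. If anything, your version is slightly cleaner, since the paper detours through rewriting churn as a difference of zero-one losses before cancelling, a step that is not actually needed once one observes (as you do) that $\churn(\cdot,\clf_B)$ is just a fixed bounded functional of its first argument.
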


This means that one model sampled from $T_{rand}$ will have the same expected churn as another model sampled from $T_{rand}$. In essence, we will not reduce churn by replacing the current model with one from the $\epsilon$-Rashomon set when using the randomized approximation approach.

\section{Empirical Results}
\label{sec::results}

This section presents experiments on real-world datasets in domains where predictive instability is particularly high-stakes (i.e., lending, housing, medicine). 

\paragraph{Setup}

\begin{table}[t!]
\centering
{\small 
\resizebox{0.9\linewidth}{!}{%
\begin{tabular}{l p{0.2\linewidth} rrr}
\toprule
Dataset Name & Outcome Variable & $n$ & $d$ & Class Imbalance \\
\midrule
Adult \citep{misc_census_income_20} & person income over \$50,000 & 16,256 & 28 & 0.31 \\ 
HMDA \citep{cooper2023prediction} & loan granted & 244,107 & 18 & 3.3 \\
Credit \citep{Yeh2009} & customer default on loan & 30,000 & 23 & 3.50 \\
Mammo \citep{Elter2007} & mammogram shows breast cancer & 961 & 12 & 0.86 \\
\bottomrule
\end{tabular}%
}}
\caption{Datasets used in the experiments. For each dataset, we report $n$, $d$ and the class imbalance ratio of a model on test data to demonstrate the diversity in dataset characteristics.}\label{Table::Datasets}
\end{table}

\begin{figure*}[!t]
     \centering
    %  \scriptsize
     \resizebox{0.9\linewidth}{!}{\begin{tabular}{ccc}
     \addPredProbsPlot{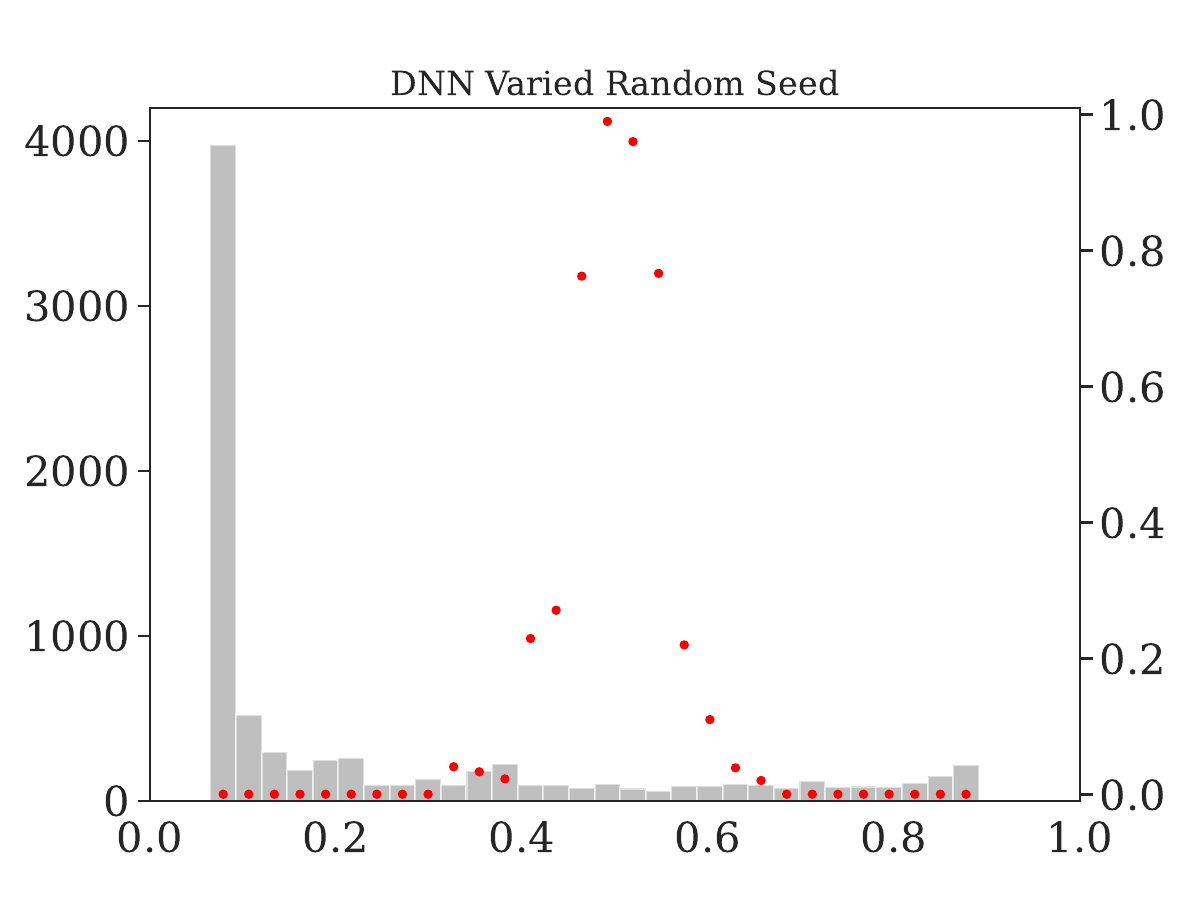} &
     \addPredProbsPlot{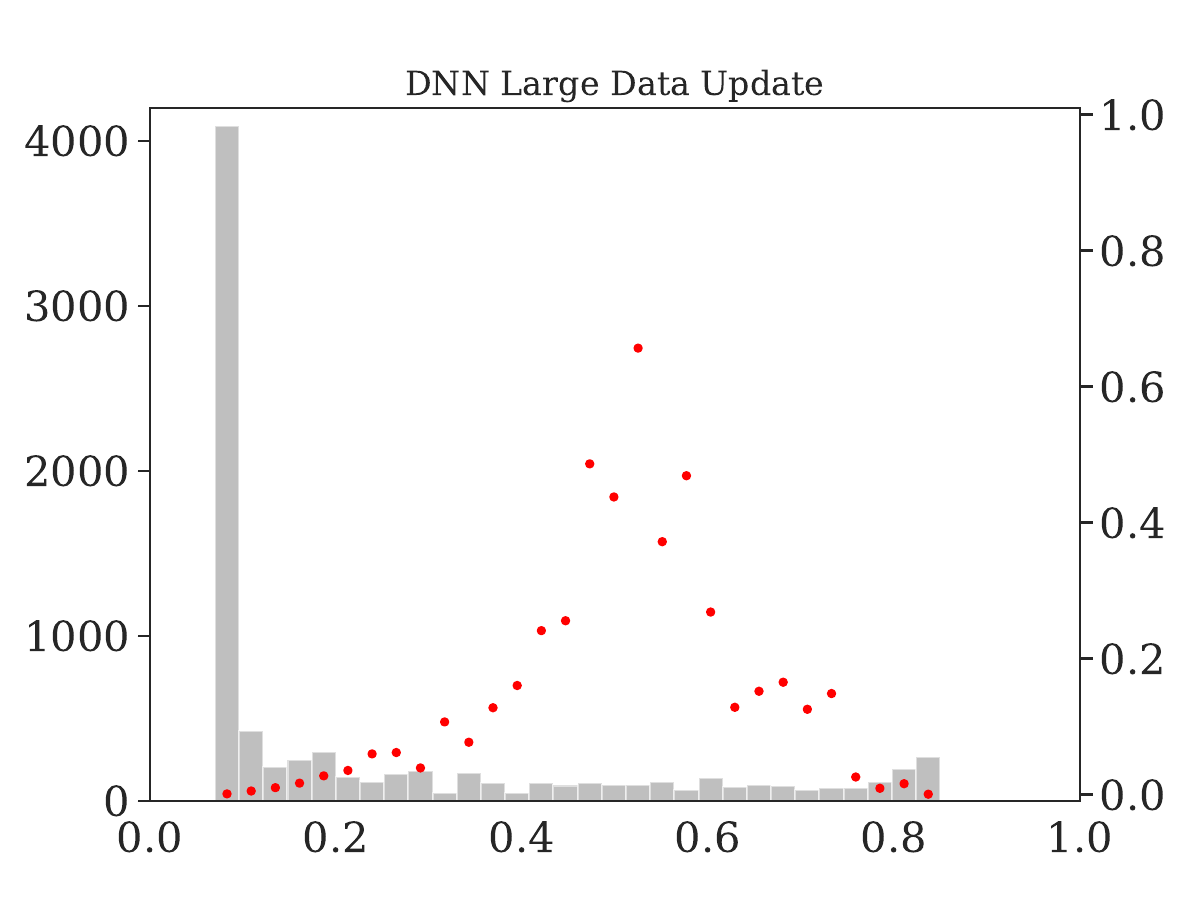} &
     \addPredProbsPlot{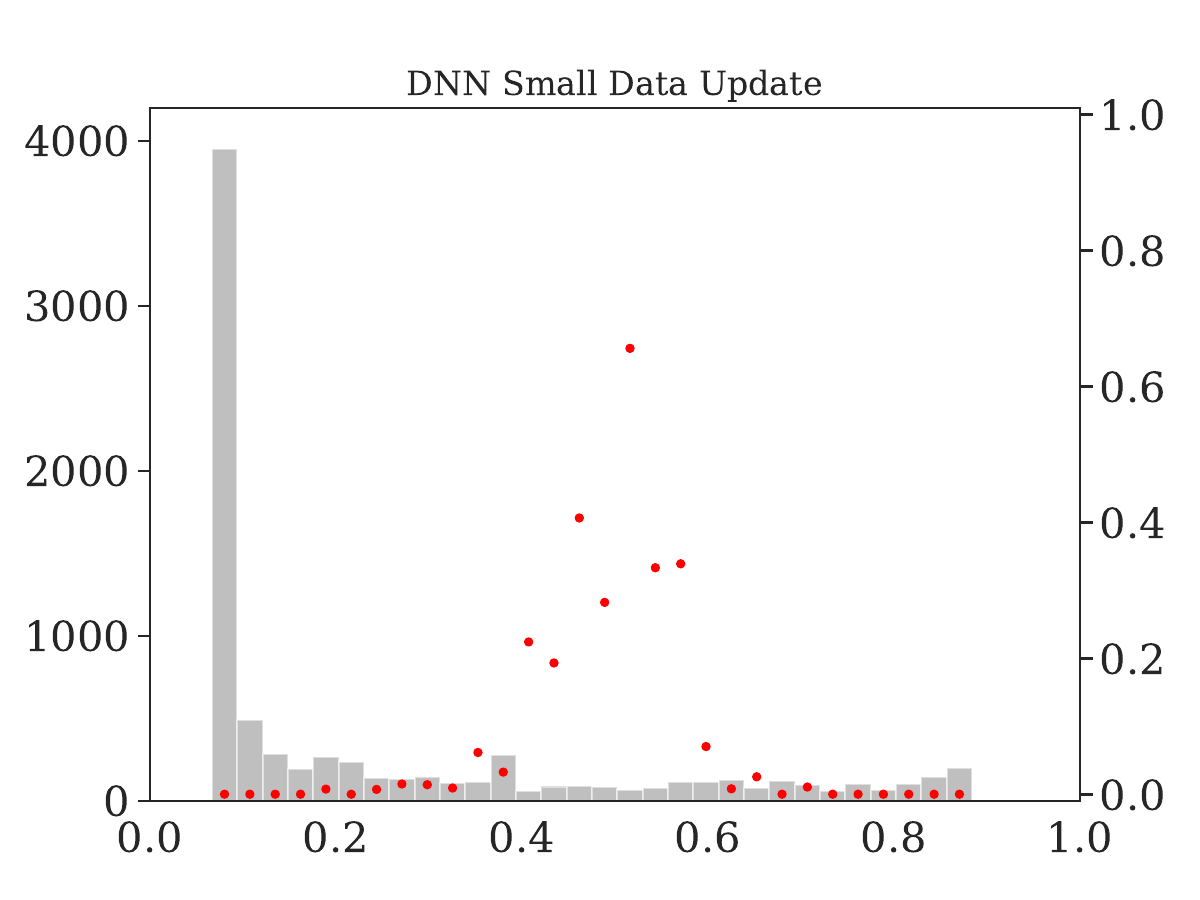}  \\
    \addPredProbsPlot{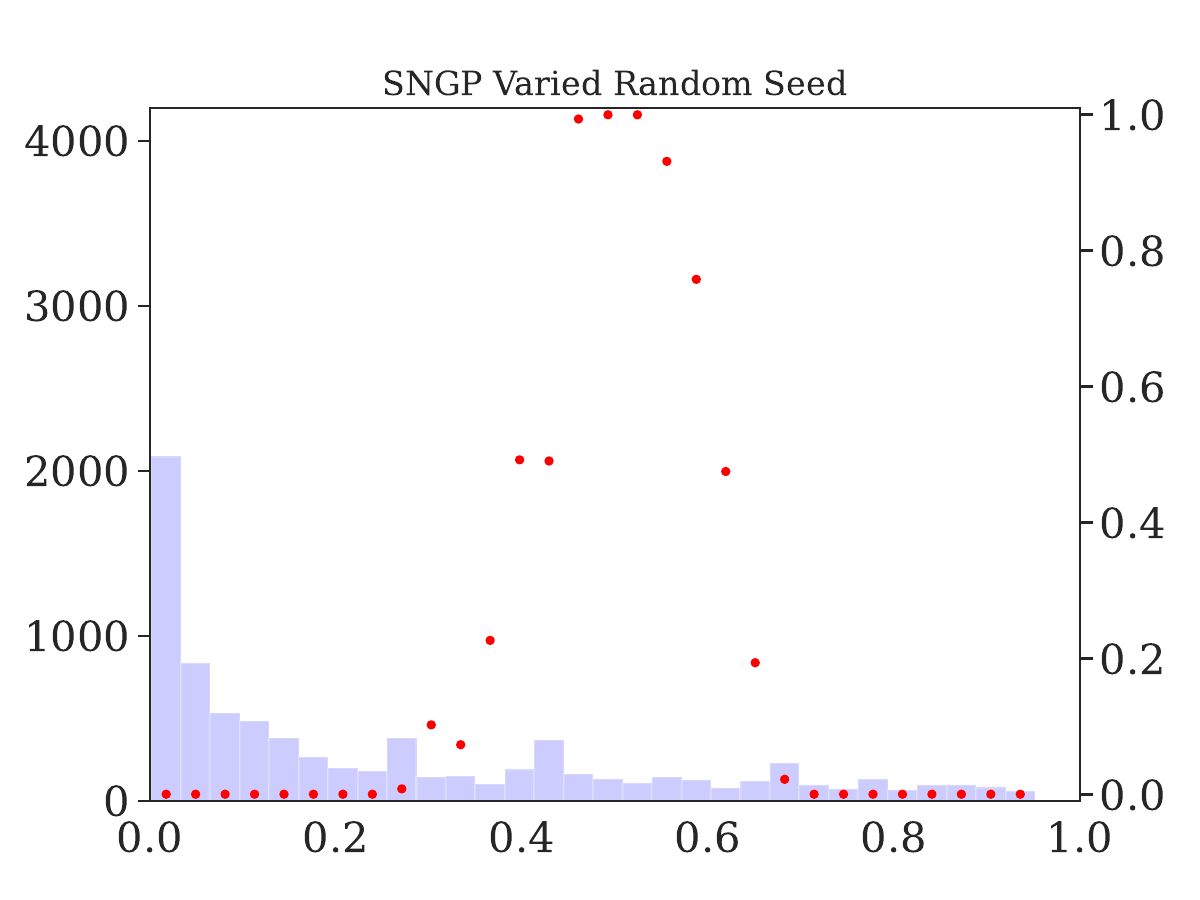} &
     \addPredProbsPlot{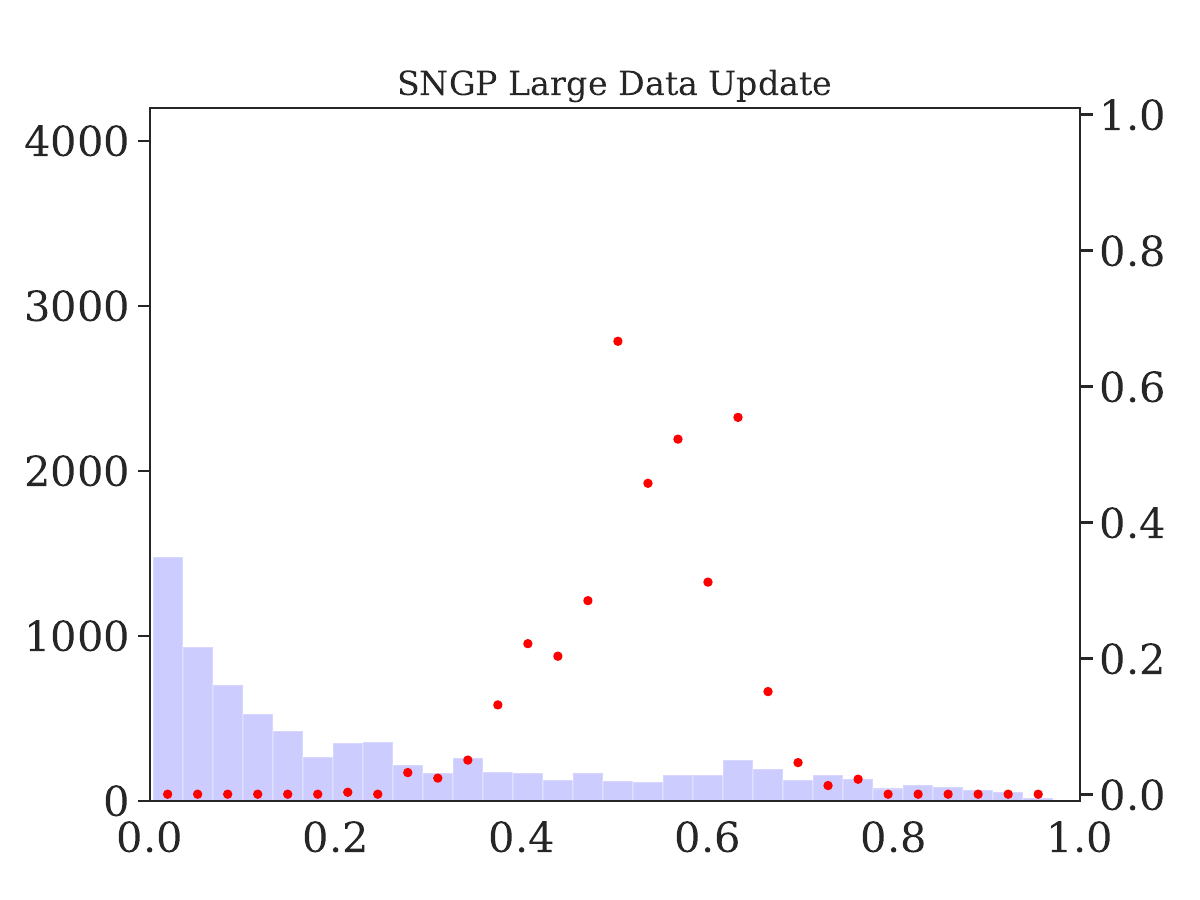} &
     \addPredProbsPlot{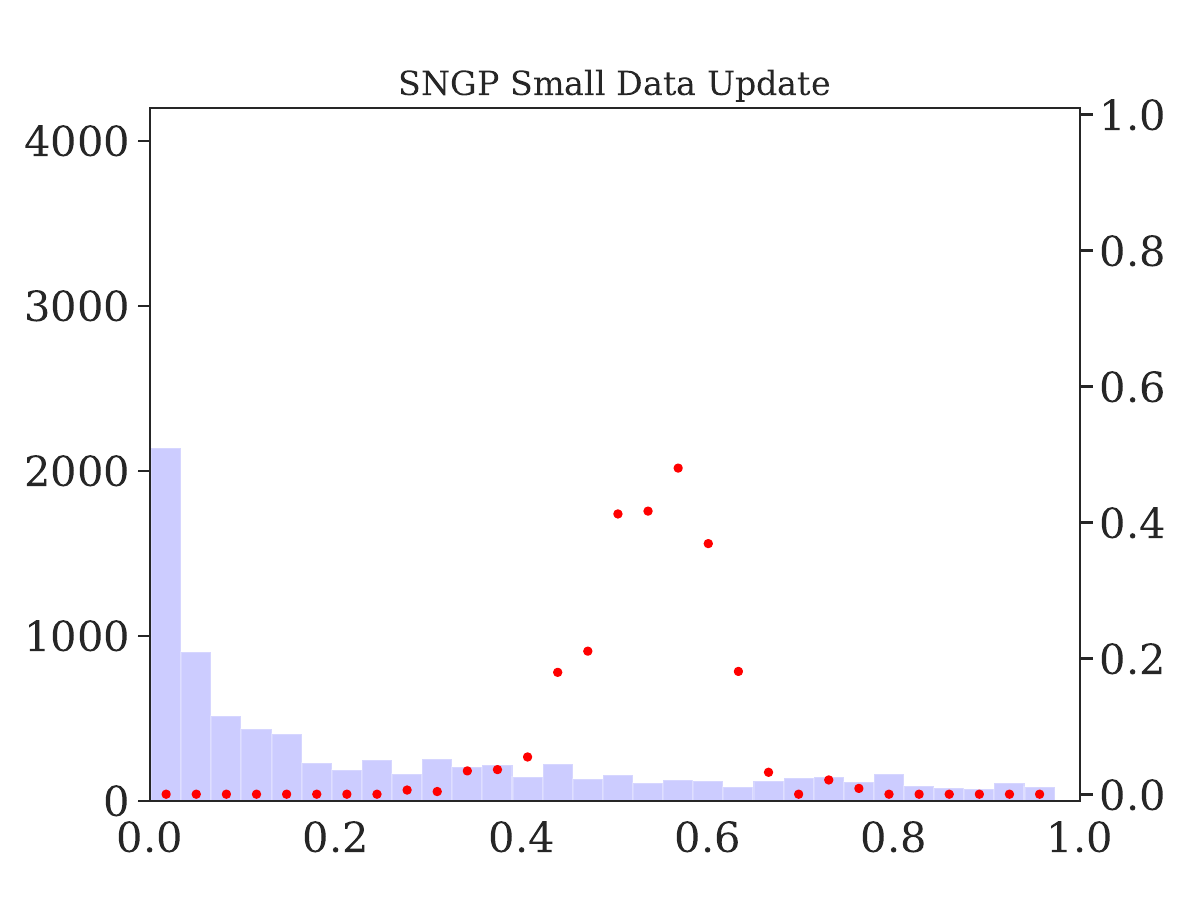} 
     \end{tabular}}
      \caption{Predicted probability distributions for the Adult Dataset. We plot a histogram of predicted probability distribution in grey with the left $y$-axis and a scatter plot of the proportion of flip counts for each bin aligned with the right $y$-axis. By overlapping the plots, we gain a comprehensive view of the model's confidence in its predictions (via the histogram) and the areas where the model predictions are most prone to change (scatter plot of flips). Notice that the scale is different between the histogram and the flip counts. The top row corresponds to the DNN experiments and the bottom row are the UA-DNN experiments. Each column represents an experiment. From the left, we show results for predictive multiplicity, large dataset update, and small dataset update.} \label{fig::pred_probs_uci}
\end{figure*}

\emph{Datasets.} We consider datasets with varying sample size, number of features, and class imbalance; summary statistics for each dataset are in Table~\ref{Table::Datasets}.~\footnote{Notice that the \textds{HMDA} dataset is an order of magnitude larger than the others ($n=244,107$).} As shown below, models trained and tested on these datasets exhibit notable variation in predictive inconsistency, i.e., this collection of datasets offers a reasonable variety.

\emph{Metrics.} We measure predictive inconsistency by computing the measures detailed in \S~\ref{sec::framework}. In terms of predictive multiplicity, we compute the empirical $\epsilon$-Rashomon set and report $\epsilon$-ambiguity over a test sample according to Eq~\eqref{eq::ambiguityRand}. As in \S~E.2 in \citet{long2023arbitrariness}, we can set $\epsilon$ in the definition of the empirical $\epsilon$-Rashomon set to the worst value of the performance metric over the generated trained models. As a result, the experiments on predictive multiplicity do not need to be explicitly parametrized by $\epsilon$. Regarding predictive churn, we report over a test sample according to Eq.~\eqref{eq::churn}.

\emph{Churn Regimes.} We compute \emph{predictive churn} Eq.~\eqref{eq::churn} for different types of successive training updates according to the literature on predictive churn~\citep{LaunchAndIterate}. First, we imitate a large dataset update by comparing Model $B$ ($\clf_B$) trained on the full dataset to Model $A$ ($\clf_A$) trained on a random sample of half the dataset. Second, we imitate a small dataset update by comparing Model $B$ ($\clf_B$) trained on the full dataset to Model $A$ ($\clf_A$) trained on a random sample of $95\%$ the dataset -- i.e., $5\%$ of examples have been dropped or added between the two models. These two updates are similar but represent two different regimes (see~\cite{pmlr-v89-giordano19a}).~\footnote{For instance, the leave-one-out jackknife is a small data perturbation, whereas bootstrap is a large data perturbation; see papers on infinitesimal jackknife, i.e., \citet{pmlr-v89-giordano19a}.} 

\emph{Model Classes.} We consider two classes of deep neural networks (DNNs). We train a standard neural network of 1 or more fully connected layers and refer to this as DNN. We also train a DNN that incorporates an uncertainty awareness technique, which we refer to as UA-DNN. For this demonstration, we implement the SNGP technique described in \S~\ref{sec::methods} to train the uncertainty-aware model, UA-DNN. To ensure the models are well calibrated, we tune the parameters within the SNGP technique and apply Platt scaling for the fully connected DNN—additional details in the appendix.

%%% Results Table
\begin{table*}[!t]
    \centering
    \resizebox{0.9\linewidth}{!}{\begin{tabular}{ll | cc | cc | cc }
    \toprule
    % & & & &
    % \multicolumn{2}{c}{Error of $\baseclf{}$} \\ 
    % \cmidrule(lr){5-6} 
    %
    \cell{l}{Dataset } &
    \cell{l}{Model } &
    \cell{l}{Predictive Multiplicity \\ (Empirical $\epsilon$-Ambiguity) } & AUC &
    \cell{l}{Predictive Churn \\ (Large Data Update)} & AUC &
    % \cell{l}{Predictive Churn \\ (Drop Random Feature)} & AUC & 
    \cell{l}{Predictive Churn  \\ (Small Data Update)} & AUC
    \\ 
    \toprule
         \textds{Adult} & DNN & 
         0.047 $\pm$ 0.003  & 0.89 $\pm$ 0.010 % amb
         & 0.058 $\pm$ 0.004 &  0.89 $\pm$ 0.009 % large
         & 0.028 $\pm$ 0.004 & 0.89 $\pm$ 0.01 \\ % small
         \textds{Credit} & DNN 
         & 0.053 $\pm$ 0.004  & 0.76 $\pm$ 0.01 % varied seed
         & 0.050 $\pm$ 0.004 & 0.76 $\pm$ 0.009 % large
         &  0.029 $\pm$ 0.004 & 0.76 $\pm$ 0.01 \\ % small
         \textds{HMDA} & DNN 
         & 0.021 $\pm$ 0.004 & 0.89 $\pm$ 0.011 % varied seed
         & 0.042 $\pm$ 0.004 & 0.89 $\pm$ 0.009 % large
         &  0.007 $\pm$ 0.004 &  0.89 $\pm$ 0.01\\ % small
        \textds{mammo} & DNN 
         & 0.007 $\pm$ 0.0018 & 0.83 $\pm$ 0.001 % varied seed
         & 0.027 $\pm$ 0.024 & 0.85 $\pm$ 0.007 % large
         &  0.014 $\pm$ 0.017 &  0.83 $\pm$ 0.004 \\ % small
         \midrule
         \textds{Adult} & UA-DNN 
         & 0.12 $\pm$ 0.010  & 0.87 $\pm$ 0.015  % varied seed
         & 0.074 $\pm$ 0.011 & 0.84 $\pm$ 0.012 % large data
         & 0.041 $\pm$ 0.008 & 0.87 $\pm$ 0.016 \\ % small ex
         \textds{Credit} & UA-DNN 
         & 0.10 $\pm$ 0.010 & 0.76 $\pm$ 0.015  % varied seed
         & 0.067 $\pm$ 0.012 &  0.76 $\pm$ 0.012 % large
         & 0.05 $\pm$ 0.008 & 0.76 $\pm$ 0.016 \\
         \textds{HMDA} & UA-DNN 
         & 0.14 $\pm$ 0.010 & 0.87 $\pm$ 0.015 % varied seed
         & 0.12 $\pm$ 0.011 & 0.84 $\pm$ 0.013 % large
         & 0.06 $\pm$ 0.008 &  0.87  $\pm$ 0.016 \\ % small
        \textds{mammo} & UA-DNN 
         & 0.047 $\pm$ 0.013 & 0.82 $\pm$ 0.001 % varied seed
         & 0.041 $\pm$ 0.019 & 0.83 $\pm$ 0.005 % large
         &  0.025 $\pm$ 0.020 &  0.83 $\pm$ 0.004 \\ % small
    \end{tabular}}
    \caption{This table shows that predictions are more sensitive to model perturbations (multiplicity) and an uncertainty-aware (UA) model can exhibit higher ambiguity compared to a standard DNN. We compare predictive multiplicity and predictive churn across datasets and model specifications. Over a held out sample $\sample_{\textit{test}}$, we compute empirical $\epsilon$-ambiguity, $\ambiguity{\epssetRand{}}$, as well as churn, $\churn{}(\clf_A, \clf_B)$, induced by a large or small data update. We also show the range of AUC over runs for each. 
    }\label{tab:main_results}
\end{table*}

\begin{table*}[!t]
    \centering
    \resizebox{0.9\linewidth}{!}{\begin{tabular}{ll | cc | cc | cc }
    \toprule
    % & & & &
    % \multicolumn{2}{c}{Error of $\baseclf{}$} \\ 
    % \cmidrule(lr){5-6} 
    %
    \cell{l}{Dataset } &
    \cell{l}{Model } &
    \cell{l}{Predictive Multiplicity \\ (Empirical $\epsilon$-Ambiguity) } & AUC &
    \cell{l}{Predictive Churn \\ (Large Data Update)} & AUC &
    % \cell{l}{Predictive Churn \\ (Drop Random Feature)} & AUC & 
    \cell{l}{Predictive Churn  \\ (Small Data Update)} & AUC
    \\ 
    \toprule
         \textds{Adult} & DNN & 
         0.004 $\pm$ 0.001  & 0.89 $\pm$ 0.001 % varied seed
         & 0.002 $\pm$ 0.006 &  0.89 $\pm$ 0.001 % large
         & 0.003 $\pm$ 0.001 & 0.89 $\pm$ 0.001 \\ % small
         \textds{Credit} & DNN 
         & 0.005 $\pm$ 0.0004  & 0.76 $\pm$ 0.002 % varied seed
         & 0.003 $\pm$ 0.0001 & 0.76 $\pm$ 0.004 % large
         &  0.0028 $\pm$ 0.0004 & 0.76 $\pm$ 0.002 \\ % small
         \textds{HMDA} & DNN 
         & 0.005 $\pm$ 0.001 & 0.90 $\pm$ 0.0003 % varied seed
         & 0.004 $\pm$ 0.001 & 0.90 $\pm$ 0.0004 % large
         &  0.003 $\pm$ 0.001 &  0.90 $\pm$ 0.0003\\ % small
        \textds{mammo} & DNN 
         & 0.004 $\pm$ 0.003 & 0.86 $\pm$ 0.003 % varied seed
         & 0.004 $\pm$ 0.003 & 0.85 $\pm$ 0.009 % large
         &  0.002 $\pm$ 0.002 &  0.85 $\pm$ 0.01 \\ % small
         \midrule
         \textds{Adult} & UA-DNN 
         & 0.0 $\pm$ 0.0  & 0.89 $\pm$ 0.002  % varied seed
         & 0.028 $\pm$ 0.0001 & 0.87 $\pm$ 0.002 % large data
         & 0.019 $\pm$ 0.002 & 0.88 $\pm$ 0.003 \\ % small ex
         \textds{Credit} & UA-DNN 
         & 0.0 $\pm$ 0.0 & 0.75 $\pm$ 0.004  % varied seed
         & 0.035 $\pm$ 0.003 &  0.75 $\pm$ 0.006 % large
         & 0.020 $\pm$ 0.002 & 0.75 $\pm$ 0.003 \\
         \textds{HMDA} & UA-DNN 
         & 0.0 $\pm$ 0.0 & 0.90 $\pm$ 0.001 % varied seed
         & 0.046 $\pm$ 0.002 & 0.90 $\pm$ 0.0001 % large
         & 0.041 $\pm$ 0.002 &  0.90  $\pm$ 0.0002 \\ % small
        \textds{mammo} & UA-DNN 
         & 0.0 $\pm$ 0.0 & 0.84 $\pm$ 0.003 % varied seed
         & 0.02 $\pm$ 0.009 & 0.83 $\pm$ 0.010 % large
         &  0.005 $\pm$ 0.006 &  0.84 $\pm$ 0.008 \\ % small
    \end{tabular}}
    \caption{Ensemble Results. We compare predictive multiplicity and predictive churn across datasets and model specifications. Over a held out sample $\sample_{\textit{test}}$, we compute empirical $\epsilon$-ambiguity, $\ambiguity{\epssetRand{}}$, as well as churn, $\churn{}(\clf_A, \clf_B)$, induced by a large or small data update. We also show the range of AUC over runs for each. 
    }\label{tab:ensemble_results}
\end{table*}

\subsection{Results}

\paragraph{Predictive Multiplicity vs Predictive Churn.}
We investigate whether the severity of predictive churn between Model $A$ and Model $B$ is captured by predictive multiplicity analysis on only Model $A$. Findings for the Standard DNN and UA-DNN are shown in Table~\ref{tab:main_results}.
Notably, we see that model performance, as measured by AUC, is mostly uniform across the table: random seed/data perturbations (columns) do not seem to affect overall predictive performance, whereas AUC of the UA-DNN is less than or equal to that of DNN. 

We highlight several patterns.
First, although they are measured on similar scales, predictive multiplicity (ambiguity) tends to be larger than predictive churn. Thus, in the settings that we study, predictions appear to be broadly more sensitive to model perturbations than to data updates. But only by a small amount. 

Second, within model specifications (DNN or UA-DNN), predictive multiplicity and predictive churn measurements generally align, i.e., high predictive multiplicity corresponds to high predictive churn (across both churn regimes) relative to other datasets. Thus, for a given model, it is possible that the same properties of the dataset drive predictive multiplicity and predictive churn.

However, interestingly, between the DNN and UA-DNN, we see that different datasets exhibit high prediction inconsistencies. For example, while DNN exhibits high(er) predictive multiplicity on $\textds{Credit}$, the UA-DNN exhibits higher ambiguity on $\textds{HMDA}$ but relatively lower on $\textds{Credit}$. This highlights that prediction inconsistency is driven by an interaction between the dataset and the model specification, not by the data itself, echoing predictive arbitrariness studies from algorithmic fairness~\cite{cooper2023prediction}. This also highlights that a particular model spec may not be a general solution for mitigation.

\paragraph{Comparison of Unstable Sets.}
We examine whether examples that are unstable over the update between Model $A$ and Model $B$ are included in those flagged as unstable when only using the $\epsilon$-Rashomon set of Model $A$. For a given dataset, we take a heldout test sample and compute $\unstablesetR{}(\sample_{\textit{test}})$ and $\unstablesetC{}(\sample_{\textit{test}})$. Given that $\#\{\unstablesetR{}(\sample_{\textit{test}})\}$ tends to be greater than $\#\{\unstablesetC{}(\sample_{\textit{test}})\}$, we calculate what proportion of test examples in $\#\{\unstablesetC{}(\sample_{\textit{test}})\}$ are contained in $\#\{\unstablesetR{}(\sample_{\textit{test}})\}$ and report this common inconsistency. 

For instance, if all the examples in $\sample_{\textit{test}}$ that churn are contained in the $\epsilon$-Rashomon unstable set, then the common inconsistency would be $100\%$. As expected, for the small data updates, the common inconsistency is much higher than compared to the large data update. Comparing model classes, the UA-DNN for small dataset updates seems to recover the most significant overlap (results in app. Table~\ref{tab:flip_idx}).

\paragraph{Predicted Probabilities and Unstable Examples.}
Finally, we examine how predicted probabilities relate to which points are identified as unstable. With the $\epsilon$-Rashomon unstable set and the churn unstable sets over a given test sample, we visualize the number of unstable examples alongside the full predicted probability distribution in Figure~\ref{fig::pred_probs_uci}. First, we plot a histogram of the predicted probabilities for the test sample. Then, for each bin of the histogram, we compute the counts of the unstable (flipped) examples within that bin. Namely, the number of unstable (flipped) examples in a bin divided by the total number of predictions in that bin. This highlights where the model's predictions are most unstable, as indicated by a higher proportion of unstable points.

% We see that predicted probabilities of flipped examples (red points) are similarly concentrated in the middle of the unit interval comparing DNN to UA-DNN, which is somewhat surprising given the explicit consideration of uncertainty in UA-DNN. But one side effect of this consideration is that small perturbations may send UA-DNN predictions across the default decision boundary, which could explain the generally higher rates of inconsistency in Table~\ref{tab:main_results}, especially under the predictive multiplicity perspective. The findings suggests that UA-DNN models can provide useful indications of which examples are more at risk of being unstable under perturbations of the UA-DNN model, as a result of both predictive multiplicity or churn. Hence, the results show that model specification may not be the driving factor here. The predicted probabilities around the threshold ($0.5$) are more likely to be unstable. Therefore, the important difference in model type seems to be calibration.

\paragraph{Ambiguity and Churn for Ensemble Classifiers.}
Given that ensembling decreases ambiguity~\cite{Black2021, long2023arbitrariness}, we compute ambiguity and churn for ensemble classifiers, showing results in Table~\ref{tab:ensemble_results}. Notably, the ambiguity for the uncertainty-aware model is zero across datasets. Moreover, churn has decreased significantly as well. These results support the intuition that predictive multiplicity reduction is related to churn reduction and that both perspectives might benefit from engaging with uncertainty-aware model types.

\paragraph{Predicting Churn.}
As described in \S~\ref{sec::methods}, we can train a classifier to predict churn to examine correlation between ambiguity and predictive churn. First, we examine the correlation between variables by analyzing the Pearson Correlation between the features, predicted probabilities, ambiguity indicator and churn indicator. We focus on correlation between ambiguity and churn. In Figure~\ref{fig::corr_DNN}, there is not much correlation between ambiguity and churn for the mammo and adult datasets (top left and right). But there does seem to be a negative correlation for the hmda and credit datasets (bottom left and right). This illuminates an interesting relationship between the two concepts.

\subsection{Implications}
\label{sec::implications}

Our findings reveal that analyzing predictive multiplicity is a useful way to anticipate predictive churn over time. We can consider the set of \emph{prospective models} around the selected deployed model and draw conclusions about anticipated predictive churn. Given that research in predictive multiplicity has largely focused on how to measure its severity and methods to train the $\epsilon$-Rashomon set, the present study demonstrates how predictive multiplicity can help assess an important notion of predictive instability (churn).

To combine predictive multiplicity and churn, a practitioner could conduct one analysis after the other. For choosing a better starting point while anticipating model updates, we can begin with a predictive multiplicity analysis following by a predictive churn analysis. Say for instance, we have a model $A$ that we are considering for deployment. We can ask if there might exist a model within the $\epsilon$-Rashomon set for which the anticipated churn 
is likely less than that of   model $A$. To do this, we can train the $\epsilon$-Rashomon set with model $A$ as a baseline then evaluate changes in the churn unstable set for each model within the Rashomon set. We can also train the $\epsilon$-Rashomon set without assuming a baseline and choose the model that might minimize expected churn from that.

Previous studies have examined various churn reduction methods~\cite{LaunchAndIterate, jiang2022churn}. It will be interesting in future work to examine whether known churn reduction methods (e.g., distillation and constrained weight optimization) might improve predictive multiplicity. To do this, we would analyze predictive multiplicity over a standard training procedure then, make improvements to said training procedure that for churn reduction and analyze predictive multiplicity over this improved training procedure. Similar to our empirical demonstrations, you can then take a fixed test set and compare the $\epsilon$-Rashomon unstable set against the churn unstable set. Ultimately, this would provide insight into whether training procedures that are more robust to churn are also more robust to predictive multiplicity. And, in line with bridging between uncertainty quantification and fairness as arbitrariness, future work can also explore additional methods from reliable deep learning i.e ~\cite{plex}.

\subsection{Limitations and Future Work}
\label{sec::limits}

While our theoretical results offer valuable insights, they are not without limitations. The $\beta$-stability assumption and smooth churn assumptions offer a convenient way to derive bounds, but the practical impact depends on the nature of the dataset and training procedure. While the assumptions are theoretically sound, they may not hold in all empirical cases. Also, the bounds derived with respect to a baseline are derived assuming that models are empirical risk minimizers, which would not work for optimization procedures that do not strictly minimize empirical risk. Moreover, the analytical upper bound on churn is helpful but may be overly conservative in practical settings. Despite these limitations, the bounds remain useful, offering a worst-case scenario that can give practitioners intuition on leveraging predictive multiplicity to anticipate risks associated with model updates. Future work could refine these bounds by relaxing the $\beta$-stability assumption or studying the tightness of the bounds in relation to empirical observations. Given that the empirical Rashomon set used in our experiments lacks a clear baseline, this is beyond the scope of the present study and better suited for future research.

The experiments in this study are valuable as they reveal interesting connections between predictive multiplicity and predictive churn. However, it is essential to acknowledge some limitations. As noted in our experiments, the Rashomon set is defined via the empirical Rashomon set without a clear baseline, which does not encompass the full range of Rashomon set definitions that one might consider while performing this analysis. For example, \S~\ref{sec::add-defs} shows an algorithm for training the Rashomon set with respect to a clear baseline. Incorporating such definitions would require extensive experimentation outside the scope of this paper but is primed for future work. Also, our experiments focus only on two model types (DNN and DNN-UA), potentially leaving an opportunity for a study entirely focused on a variety of model types and how they relate to these phenomena. Additionally, while our experiments demonstrate that a method aimed at reducing predictive multiplicity also reduces churn, our study does not explore the reverse scenario. Examining this reciprocal relationship could provide deeper insight and is a promising direction for future research.

\subsection{Concluding Remarks}
Understanding predictive inconsistency is crucial for both deploying ML in industry and addressing algorithmic fairness concerns. In this paper, we have taken initial steps to link two previously unconnected concepts: predictive churn and predictive multiplicity. Our work provides both theoretical and empirical insights, offering a foundation for further exploration.~\footnote{Please see \S~\ref{sec::implications} for a discussion of general implications, see the discussion in \S~\ref{sec::theory} for takeaways of theoretical results and see \S~\ref{sec::limits} for a more detailed limitations/future work discussion.} More broadly, we advocate for integrating research on fairness and safety with efforts to achieve reliable and robust learning as an opportunity to enhance the synergy between these fields. 

\section{Ethics and Adverse Impact Statement}
The study of predictive inconsistency is socially relevant to how people interpret and trust the output of predictive models. A better understanding of predictive inconsistency supports the ethical obligation (on the part of researchers and developers) to be transparent about models that are being used in real world settings. Our methodologies and discussions are mindful of these considerations.

\clearpage
\bibliographystyle{ACM-Reference-Format}
\bibliography{references.bib}

%%%%%%%%%%%%%%%%%%%%%%%%%%%%%%%%%%%%%%%%%%%%%%%%%%%%%%%%%%%%%%%%%%%%%%%%%%%%%%%
%%%%%%%%%%%%%%%%%%%%%%%%%%%%%%%%%%%%%%%%%%%%%%%%%%%%%%%%%%%%%%%%%%%%%%%%%%%%%%%
% APPENDIX
%%%%%%%%%%%%%%%%%%%%%%%%%%%%%%%%%%%%%%%%%%%%%%%%%%%%%%%%%%%%%%%%%%%%%%%%%%%%%%%
%%%%%%%%%%%%%%%%%%%%%%%%%%%%%%%%%%%%%%%%%%%%%%%%%%%%%%%%%%%%%%%%%%%%%%%%%%%%%%%
\newpage
\appendix
\onecolumn

\section{Omitted Proofs}

%%%% PROOF 1
\begin{proof}[Proof of Proposition~\ref{Prop::churn_bound}]
  This follows from the triangle inequality.
  %to bound the distance between predictions of two classifiers $\clf_A$ and  $\clf_B$.
  For a set $S=\{\xb_1,\ldots,\xb_n\}$,
 we denote the predictions as  vectors: 
    $$Y_1 = (\clf_1(\xb_1),...,\clf_1(\xb_n) ) \in \{0,1\}^n$$
    $$Y_2 = (\clf_2(\xb_1),...,\clf_2(\xb_n) ) \in \{0,1\}^n$$
    Let $Y$ denote the ground-truth label,
    $$Y = (y_1,...,y_n) \in \{0,1\}^n.$$

    The empirical risk $\erm{}$
    of a classifier can be expressed in terms of the $L_1$ norm between the predictions 
and the ground truth: 
    $$ \erm{}(\clf_1) = \frac{||Y_1 - Y||_1}{n}, \quad  \erm{}(\clf_2) = \frac{||Y_2 - Y||_1}{n} $$
    
    Similarly, we write churn as the $L_1$ norm between the predictions of the two models. 

    $$ \churn{}(\clf_1,\clf_2) = \frac{||Y_1 - Y_2||_1}{n} $$

    The triangle inequality results in:

    $$ ||Y_1-Y_2||_1 \leq ||Y_1-Y||_1 + ||Y-Y_2||_1 $$

  Substitution and dividing by $n$ gives
    $$\churn{}(\clf_1,\clf_2) \leq \erm{}(\clf_1) + \erm{}(\clf_2).$$
\end{proof}

\begin{proof}[Proof of Corollary~\ref{cor::churn-bound}]
    By definition, $\erm(\clf{}') \leq \erm(\clf{}_0)+ \epsilon$. Following Proposition \ref{Prop::churn_bound}, we have:
    \begin{align}
        \!  \churn{} (\clf{}_0, \clf{}') \!  &\leq \erm{}(\clf_0) + \erm{}(\clf{}')
  \leq  2 \erm{}(\clf{}_0) + \epsilon.
    \end{align}
\end{proof}

%%%% PROOF 2
\begin{proof}[Proof of Lemma~\ref{lem::churn-diff}]
We use linearity of expectation and the assumption that models in $\mathcal{T}_{\data_A}$ are sampled i.i.d.~to show that the difference in expectation is $0$.
\begin{align*}
     &\mathbb{E}_{\clf{}_A, \clf{}'_A \stackrel{iid}{\sim} \trainRand{}(\data{}_A)} \left[\churn{}(\clf{}_A, \clf{}_B) - \churn{}(\clf{}'_A, \clf{}_B) \right] \\
     &=\mathbb{E}_{\clf{}_A \stackrel{iid}{\sim} \trainRand{}(\data{}_A)} [\churn{}(\clf{}_A, \clf{}_B)] \\  &\quad - \mathbb{E}_{\clf{}'_A \stackrel{iid}{\sim} \trainRand{}(\data{}_A)} [\churn{}(\clf{}'_A, \clf{}_B))] \\
    &=\mathbb{E}_{\clf{}_A \stackrel{iid}{\sim} \trainRand{}(\data{}_A)} [\mathbb{E}_{(X,Y)\sim \data{}} [\ell_{0,1}(\clf{}_A(X),Y) - \ell_{0,1}(\clf{}_B)(X),Y)]] \\  &\quad - \mathbb{E}_{\clf{}'_A \stackrel{iid}{\sim} \trainRand{}(\data{}_A)} [\mathbb{E}_{(X,Y)\sim \data{}} [\ell_{0,1}(\clf{}'_A(X),Y) - \ell_{0,1}(\clf{}_B)(X),Y)]] \\
    &=\mathbb{E}_{\clf{}_A, \clf{}'_A \stackrel{iid}{\sim} \trainRand{}(\data{}_A)} [\mathbb{E}_{(X,Y)\sim \data{}} [\ell_{0,1}(\clf{}_A(X),Y) - \ell_{0,1}(\clf{}'_A(X),Y)]] \\
     &=\mathbb{E}_{\clf{}_A \stackrel{iid}{\sim} \trainRand{}(\data{}_A)} [\mathbb{E}_{(X,Y)\sim \data{}} [\ell_{0,1}(\clf{}_A(X),Y)]] - \mathbb{E}_{\clf{}'_A \stackrel{iid}{\sim} \trainRand{}(\data{}_A)}[\mathbb{E}_{(X,Y)\sim \mathcal{D}} [\ell_{0,1}(\clf{}'_A(X),Y)]] \\
     &= 0.
\end{align*}
\end{proof}

%%%% PROOF 3
\begin{proof}[Proof of Theorem~\ref{thm::churn-rashomon}]

We first state the results from ~\citet{LaunchAndIterate}.
\begin{theorem}[Bound on Expected Churn~\citep{LaunchAndIterate}] Assuming a training algorithm that is $\beta$-stable, given training datasets $\data_A$ and $\data_B$, sampled i.i.d.~from $\data{}^n$ where two classifiers $\clf{}_A$ and $\clf{}_B$ are trained on $\data_A$ and $\data_B$ respectively, the expected smooth churn obeys:
\begin{align}
    \mathbbm{E}_{\data_A, \data_B \sim \data^n} \left[ \churn_{\gamma}(\clf{}_A, \clf{}_B) \right] \leq \frac{\beta\sqrt{\pi n}}{\gamma}.
\end{align}
\label{thm::expected_churn_beta_stable}

\end{theorem}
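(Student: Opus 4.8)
The plan is to follow \citet{LaunchAndIterate}: reduce the expected smooth churn to the expected size of the gap between the two discriminant scores at a test point, and then control that gap using $\beta$-stability. First I would use that the ramp loss $\ell_\gamma$ is $\tfrac{1}{\gamma}$-Lipschitz in its score argument, since its only non-constant piece has slope $1/\gamma$ and $|Y|\le 1$; hence for any fixed $(X,Y)$ and scores $a,b$ we have $|\ell_\gamma(a,Y)-\ell_\gamma(b,Y)|\le \tfrac{1}{\gamma}|a-b|$. Bounding the signed integrand in the definition of $\churn{}_\gamma$ by its absolute value gives
\begin{align}
\churn{}_\gamma(\clf{}_A,\clf{}_B)\le \frac{1}{\gamma}\,\mathbb{E}_{(X,Y)\sim\data{}}\big[\,|\clfScore{}_A(X)-\clfScore{}_B(X)|\,\big],
\end{align}
so after taking the outer expectation over $\data_A,\data_B$ it suffices to show $\mathbb{E}_{\data_A,\data_B,X}\big[|\clfScore{}_A(X)-\clfScore{}_B(X)|\big]\le \beta\sqrt{\pi n}$.

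Second, fix $X=x$ and consider $\Delta(x) := \clfScore{}_A(x)-\clfScore{}_B(x)$. By symmetry of the two i.i.d.\ samples $\data_A,\data_B\sim\data{}^n$, we have $\mathbb{E}[\Delta(x)]=0$. For the second moment, $\beta$-stability controls how much the score moves under a single-example change, so the Efron--Stein inequality gives $\mathrm{Var}(\clfScore{}_{\data}(x))\le \tfrac{1}{2}\sum_{i=1}^n\mathbb{E}\big[(\clfScore{}_{\data}(x)-\clfScore{}_{\data^i}(x))^2\big]\le \tfrac{1}{2}n\beta^2$, where $\data^i$ replaces the $i$th example (Def.~\ref{def::beta-stability}). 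Since $\clfScore{}_A(x)$ and $\clfScore{}_B(x)$ are independent, $\mathrm{Var}(\Delta(x))=2\,\mathrm{Var}(\clfScore{}_{\data}(x))\le n\beta^2$. Intuitively, interpolating from $\data_A$ to $\data_B$ one example at a time turns $\Delta(x)$ into a sum of $n$ mean-zero increments each bounded by $\beta$, i.e.\ a mean-zero random walk.

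Third, because $\Delta(x)$ is mean-zero, Jensen's inequality converts the second-moment bound into a first-moment bound, $\mathbb{E}\big[|\Delta(x)|\big]\le\sqrt{\mathrm{Var}(\Delta(x))}\le \beta\sqrt{n}$, uniformly in $x$; integrating over $X$ and substituting into the Lipschitz reduction yields $\mathbb{E}[\churn{}_\gamma(\clf{}_A,\clf{}_B)]\le \beta\sqrt{n}/\gamma$, which lies within the claimed bound $\beta\sqrt{\pi n}/\gamma$. The slightly larger $\sqrt{\pi}$ constant in the statement comes from the finer random-walk accounting in \citet{LaunchAndIterate}, where the telescoping increments are treated as the steps of a mean-zero walk and the expected absolute displacement is evaluated via a half-normal (Gaussian) approximation; I would reproduce their computation to recover the exact constant.

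The main obstacle is the step from the crude per-swap estimate---which telescopes only to the trivial $n\beta$---down to the $\sqrt{n}$ scaling. This is precisely where the distributional structure must be exploited: one needs the increments to behave like independent, mean-zero steps (established through exchangeability together with either the Efron--Stein decomposition above or a martingale-difference argument under the right conditioning order), so that their \emph{squared} contributions add rather than their magnitudes. Pinning down the exact $\sqrt{\pi}$ factor, as opposed to the $O(\sqrt{n})$ order, is the only remaining subtlety, and since the statement is quoted verbatim from \citet{LaunchAndIterate} I would defer to their derivation for that constant.
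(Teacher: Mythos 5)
Your proposal is correct, but note that the paper itself never proves this statement: it is quoted with citation from \citet{LaunchAndIterate} and used as a black box inside the proof of Theorem~\ref{thm::churn-rashomon}, so the comparison is really with the original source. Your route is genuinely different and, as written, proves something slightly stronger. The Lipschitz reduction $C_\gamma(h_A,h_B)\le \tfrac{1}{\gamma}\,\mathbb{E}\left[\left|f_A(X)-f_B(X)\right|\right]$ is sound, and the Efron--Stein step is fully rigorous: Definition~\ref{def::beta-stability} is exactly a bounded-differences condition on $T\mapsto f_T(x)$, so $\mathrm{Var}(f_T(x))\le n\beta^2/2$, independence of $\data_A,\data_B$ gives $\mathrm{Var}(\Delta(x))\le n\beta^2$, and Jensen (with $\mathbb{E}[\Delta(x)]=0$) yields $\mathbb{E}\left[\left|\Delta(x)\right|\right]\le\beta\sqrt{n}$, hence $\mathbb{E}[C_\gamma]\le\beta\sqrt{n}/\gamma\le\beta\sqrt{\pi n}/\gamma$. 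Two corrections to your commentary. First, the $\sqrt{\pi}$ does not come from a half-normal evaluation (that would give the \emph{smaller} $\beta\sqrt{2n/\pi}$); it is exactly what the sub-Gaussian tail route produces: McDiarmid applied to each score gives $\Pr\left(\left|\Delta(x)\right|\ge t\right)\le 2e^{-t^2/(n\beta^2)}$, and integrating the tail gives $\beta\sqrt{\pi n}$ on the nose. Your variance-plus-Jensen route is tighter because it avoids integrating a worst-case tail. Second, the swap-chain increments are in general neither independent nor mean-zero; the correct formalization is the Doob martingale with increments bounded by $\beta$, which is all that Efron--Stein or Azuma needs---and your actual proof relies only on that, so the heuristic is dispensable. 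The same caveat flagged in the paper applies to you too: with randomized training the algorithm's seed must be folded into the probability space for the coordinatewise bounded-differences property to hold.
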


From Theorem~\ref{thm::expected_churn_beta_stable}, the smooth churn between the two baseline models is bounded by:
\begin{align*}
    \mathbbm{E}_{\data_A, \data_B \sim \data^m} [\churn{}_{\gamma}(\clf{}_0^A, \clf{}_0^B)] \leq \frac{\beta\sqrt{\pi n}}{\gamma}.
\end{align*}

% \dcp{oh, are the models in thm 5.4 optimal on training data (or generalization error?) and is
% this required for these baselines here?} \carol{this is a bound on generalization error we cited in Theorem 5.5. there's no assumption on baseline model, only assumption is that the training algorithm is $\beta$-stable} 

The churn between any two models within the $\epsilon$-Rashomon sets, $\epsset{\clf{}_0^A}$ and $\epsset{\clf{}_0^B}$, is bounded by this constant plus a new $2\epsilon$ term.
To show this, we  apply the triangle inequality and Lemma~\ref{lem::churn-diff}, working
with any pair of models, $\clf{}'_A \in \epsset{\clf_0^A}$ and $\clf{}'_B \in \epsset{\clf{}_0^B}$:
\allowdisplaybreaks
\begin{align*}
    &\mathbbm{E}_{\data_A, \data_B \sim \data^m} [\churn{}_{\gamma}(\clf{}'_A, \clf{}'_B)] \\
    &= \mathbbm{E}_{(X, Y) \sim \data} \left[ \ell_{\gamma}(\clf{}'_A(X), Y) - \ell_{\gamma}(\clf{}'_B, Y) \right] \\
    &= \mathbbm{E}_{(X, Y) \sim \data} [ \ell_{\gamma}(\clf{}'_A(X), Y) + \ell_{\gamma}(\clf{}^A_0(X), Y)  \\ &- \ell_{\gamma}(\clf{}^A_0(X), Y) + \ell_{\gamma}(\clf{}^B_0(X), Y) - \ell_{\gamma}(\clf{}^B_0(X), Y) \\ &- \ell_{\gamma}(\clf{}'_B, Y) ] \\
    &= \mathbbm{E}_{(X, Y) \sim \data} \left[ \ell_{\gamma}(\clf{}'_A(X), Y) -  \ell_{\gamma}(\clf{}^A_0(X), Y) \right] \\
    &+ \mathbbm{E}_{(X, Y) \sim \data} \left[ \ell_{\gamma}(\clf{}^A_0(X), Y) - \ell_{\gamma}(\clf{}^B_0(X), Y) \right] \\
    &+ \mathbbm{E}_{(X, Y) \sim \data} \left[ \ell_{\gamma}(\clf{}^B_0(X), Y)  - \ell_{\gamma}(\clf{}'_B, Y) \right] \\
    & \leq \epsilon + \frac{\beta\sqrt{\pi n}}{\gamma} + \epsilon = \frac{\beta\sqrt{\pi n}}{\gamma} + 2 \epsilon,
\end{align*}
where the second and third equalities are algebra.
For the inequality, the first and third expectations  
follow from the Definition of smooth churn and the middle expectation from 
 Theorem \ref{thm::expected_churn_beta_stable}.
 For the final equality,  we 
 appeal to
 Definition~\ref{def::rashomon}, with $\ell_\gamma$ as the performance metric and $\epsilon$ being the parameter of the Rashomon set. 
\end{proof}

\clearpage

\section{Additional Experimental Details}

\paragraph{Models} All models use a shallow neural network with 1 or more fully connected layers. There is 1 hidden layer with 279 units, learning rate of 0.0000579, dropout rate of 0.0923 and batch normalization is enabled. All training is conducted in TensorFlow with a batch size of 128. When training sets of models, we use multiple arrays of random seeds \{\scalebox{0.7}{0.0}, \scalebox{0.7}{1.0}, \scalebox{0.7}{109}, \scalebox{0.7}{10}, \scalebox{0.7}{1234}\}, 
\{\scalebox{0.7}{3666}, \scalebox{0.7}{2299}, \scalebox{0.7}{2724}, \scalebox{0.7}{1262}, \scalebox{0.7}{4220}\}, 
\{\scalebox{0.7}{3971}, \scalebox{0.7}{9444}, \scalebox{0.7}{1375}, \scalebox{0.7}{7351}, \scalebox{0.7}{2083}\}, 
\{\scalebox{0.7}{1429}, \scalebox{0.7}{2281}, \scalebox{0.7}{2189}, \scalebox{0.7}{9376}, \scalebox{0.7}{2261}\} 
and \{\scalebox{0.7}{1881}, \scalebox{0.7}{2273}, \scalebox{0.7}{9509}, \scalebox{0.7}{6707}, \scalebox{0.7}{4412}\}. For varying random initialisations, we repeat experiments across these arrays. For churn experiments, we use the first random seed in the array as the default seed and repeat experiments across these values. We run on a single CPU with 50GB RAM. 

The SNGP training process follows the standard DNN learning pipeline, with the updated Gaussian process and spectral normalization  outputting  predictive logits and posterior covariance. 
%The steps for SNGP prediction are as follows. 
For a  test example, the model posterior mean and covariance are used to compute the predictive distribution.
Specifically, we approximate the posterior predictive probability, $E(p(x))$,
using the mean-field method
$ E(p(x)) \sim \text{softmax} \left( \text{logit}(x)/ \sqrt{1 + \lambda * \sigma^2(x)}\right)$,
where $\sigma^2(x)$ is the SNGP variance and $\lambda$ is 
a  hyperparameter, tuned for optimal model calibration (in deep learning,
this is known as temperature scaling~\citep{pmlr-v70-guo17a}).

\clearpage

\section{Additional Definitions}
\label{sec::add-defs}

% \paragraph{Beta-Stability}
% Below, we define $\beta$-stability which is used in smooth churn theoretical assumptions.

\paragraph{Predictive Multiplicity}
As an example of a training procedure that approximates the empirical $\epsilon$-Rashomon set with respect to a baseline model, we review the following. As noted in the main paper, these two metrics for quantifying predictive multiplicity reflect the proportion of examples in a sample $S$
that are assigned conflicts (or ``flips") over the $\epsilon$-Rashomon set. 
\begin{definition}[$\epsilon$-Ambiguity w.r.t. $\clf_0$]
The {\em ambiguity} of a prediction problem w.r.t.~$\clf{}_0$
is the proportion
of examples $i \in S$ assigned a conflicting prediction by a 
classifier in the $\epsilon$-Rashomon set:

\begin{align}
  \!  \ambiguity{\clf{}_0} \! :=\! \frac{1}{|S|} \sum_{i\in S} \max_{h \in \epsset{\clf{}_0}} \indic{h(x_i) \neq h_0(x_i)}. \label{eq::ambiguityBASE}
\end{align}
\end{definition}

\begin{definition}[Discrepancy w.r.t. $\clf_0$]
The {\em discrepancy} of a prediction problem  w.r.t.~$\clf{}_0$
is the \underline{maximum} proportion of examples $i \in S$ assigned a conflicting prediction by a single
competing classifier in the $\epsilon$-Rashomon set:

\begin{align}
  \!  \discrepancy{\clf{}_0} \! :=\!   \max_{h \in \epsset{\clf{}_0}} \frac{1}{|S|} \sum_{i\in S} \indic{h(x_i) \neq h_0(x_i)}.  \label{eq::discrepancy}
\end{align}
\end{definition}

Ambiguity characterizes the number of individuals whose predictions are sensitive to model choice with respect to the set of near-optimal models.
In domains where predictions inform decisions (e.g., loan approval or recidivism risk), 
individuals with ambiguous decisions could contest the prediction assigned to them. In contrast, discrepancy measures the maximum number of predictions that can change by replacing the baseline model 
with another near-optimal model. 

% Adversarial Training Approximate Rashomon Set
%For settings with a baseline, researchers 

An approach to compute these metrics is to 
approximate the Rashomon set by directly perturbing the
target loss in training~\cite{Marx2019, Watson-Daniels2022, multitarget}. 
We denote this {\em loss-targeting method} as $\trainAdv{}(\clf{}_0, \data{})\subseteq \Hset$, and it returns a set of hypotheses in the $\epsilon$-Rashomon set.  %\dcp{consider ${\mathcal T}_{\mathrm{target}}$}
For shorthand notation, we  leave implicit in the sequel 
the baseline and  dataset 
  in  notation $\trainAdv{}$.
\begin{definition}[Empirical $\epsilon$-Rashomon set w.r.t. $\clf{}_0$]
Given a performance metric $M$, an error tolerance $\epsilon >0$, and a baseline model $\clf{}_0$, the {\em empirical $\epsilon$-Rashomon set w.r.t.~$\clf{}_0$}
 is the set of competing classifiers $\clf \in \Hset$ induced by $\trainAdv{}$:

\allowdisplaybreaks
\begin{align}
 \epssetAdv{}(\trainAdv{}) \! :=\!  \{\clf{}\,:\, \clf{}\in \trainAdv{}, M(\clf{}; \data{}) \leq M(\clf{}_0; \data{})  +  \epsilon \}.
\end{align}
\end{definition}

\paragraph{An example of $\trainAdv{}$} Here is an example of $\trainAdv{}$.
% \section{$\epsilon$-Rashomon set for linear models}
% \label{Sec::probclf}
\citet{Watson-Daniels2022} introduced a method for computing ambiguity and discrepancy that involves training the Rashomon set as follows. A set of candidate models are trained via constrained optimization such that $P(\hat{y}_i=+1)$ is constrained to the threshold $p$ as in Eq.~\eqref{Def::candidates}. From that set of candidate models, those with near-optimal performance are selected.

\begin{algorithm}[h]
\begin{algorithmic}[1]\small
\STATE {\bfseries Input:} data $\{(x_i, y_i)\}_{i=1}^{n}$ 
\STATE {\bfseries Input:} baseline model $\{h_0{(x_i)}\}_{i=1}^n$
\STATE {\bfseries Input:} threshold probabilities $P$
\STATE {\bfseries Input:} error tolerance $\epsilon$
\FOR{$i \in \{(x_i, y_i)\}_{i=1}^{n}$}
\STATE Initialize $x_p, y_p = X(i), Y(i)$
\FOR{$p \in P$}
\STATE $h \gets$ model from Eq.~\eqref{Def::candidates}
%\STATE $\{\text{loss, CAL, AUC}\} \gets$ metrics for $\pathclf$
\STATE $pr(x_p) \gets$ $h{(x_p)}$
\ENDFOR
\ENDFOR
\STATE candidate models $\in \{h, pr(x_p) \}_{i\in [n], p \in P}$ 
\STATE $\epsilon$-Rashomon set $\gets$ candidate models that perform within $\epsilon$ of $h_0$
\STATE {\bfseries Output:} $\epsilon$-Rashomon set
\end{algorithmic}
\caption{Constructing the $\epsilon$-Rashomon set
}\label{Alg::rashomonSet}
\end{algorithm}

\newcommand{\lossfun}[1]{L(#1)}
\begin{definition}[Candidate Model]
Given a baseline model $h_0$, a finite set of user-specified threshold probabilities $P \subseteq [0,1]$, then for each $p\in P$ a {\em candidate model} for example $x_i$ is an optimal solution to the following constrained empirical risk minimization problem:
\begin{align}
\begin{split}
\min_{w \in \R^{d+1}} & \ \lossfun{w} \\
\st  \quad & h(x_i) \leq p, \;\; \textrm{if}~ p < h_0(x_i)\\
& h(x_i) \geq p.  \;\;\; \textrm{if} ~ p > h_0(x_i)
\end{split}\label{Def::candidates}
\end{align}
\end{definition}

This technique can be applied to any convex loss function $L(\cdot)$ including a convex regularization term. \citet{Watson-Daniels2022} illustrate the methodology on a probabilistic classification task with logistic regression where $h(x_i) = \frac{1}{1 + \exp(- \langle w, x_i \rangle )}$.

\clearpage
\section{Additional Results}

\paragraph{Predicted probabilities.}
We plot a histogram of predicted probability distribution in grey with the left $y$-axis and a scatter plot of the proportion of flip counts for each bin aligned with the right $y$-axis. By overlapping the plots, we gain a comprehensive view of the model's confidence in its predictions (via the histogram) and the areas where the model predictions are most prone to change (scatter plot of flips). Notice that the scale is different between the histogram and the flip counts. The top row corresponds to the DNN experiments and the bottom row are the UA-DNN experiments. Each column represents an experiment. From the left, we show results for predictive multiplicity, large dataset update, and small dataset update.

\begin{table}[t!]
    \centering
    \resizebox{0.8\linewidth}{!}{\begin{tabular}{ll | c | c }
    \toprule
    \cell{l}{Dataset } &
    \cell{l}{Model } &
    \cell{l}{Predictive Churn \\ (Large Data Update)}  &
    \cell{l}{Predictive Churn  \\ (Small Data Update)} 
    \\ 
    \toprule
         \textds{Adult} & DNN & 
         0.58  
         & 0.73   \\
         \textds{Credit} & DNN 
         & 0.47   % sampled
         &  0.85   \\ % drop ex
         \textds{HDMA} & DNN 
         & 0.68   % sampled
         &  0.78  \\ % drop ex
         \textds{mammo} & DNN 
         & 0.20   % sampled
         &  0.50  \\ % drop ex
         \midrule
         \textds{Adult} & UA-DNN 
         & 0.64   % sampled
         & 0.91   \\ % drop ex
         \textds{Credit} & UA-DNN 
         & 0.67   % sampled
         & 0.81   \\
         \textds{HDMA} & UA-DNN 
         & 0.44   % sampled
         & 0.81   \\ 
         \textds{mammo} & UA-DNN 
         & 0.73   % sampled
         & 1.0   \\ \\
    \end{tabular}}
    
    \caption{This table shows the $\epsilon$-Rashomon unstable set tends to contain many of the examples within the churn unstable set. We report common flipped examples across different
    experiments i.e. the proportion of churned examples that are included in the $\epsilon$-Rashomon unstable set.}\label{tab:flip_idx}
\end{table}

\begin{figure}[ht!]
     \centering
    %  \scriptsize
     \resizebox{\linewidth}{!}{%
     \begin{tabular}{ccc}
     \addPredProbsPlot{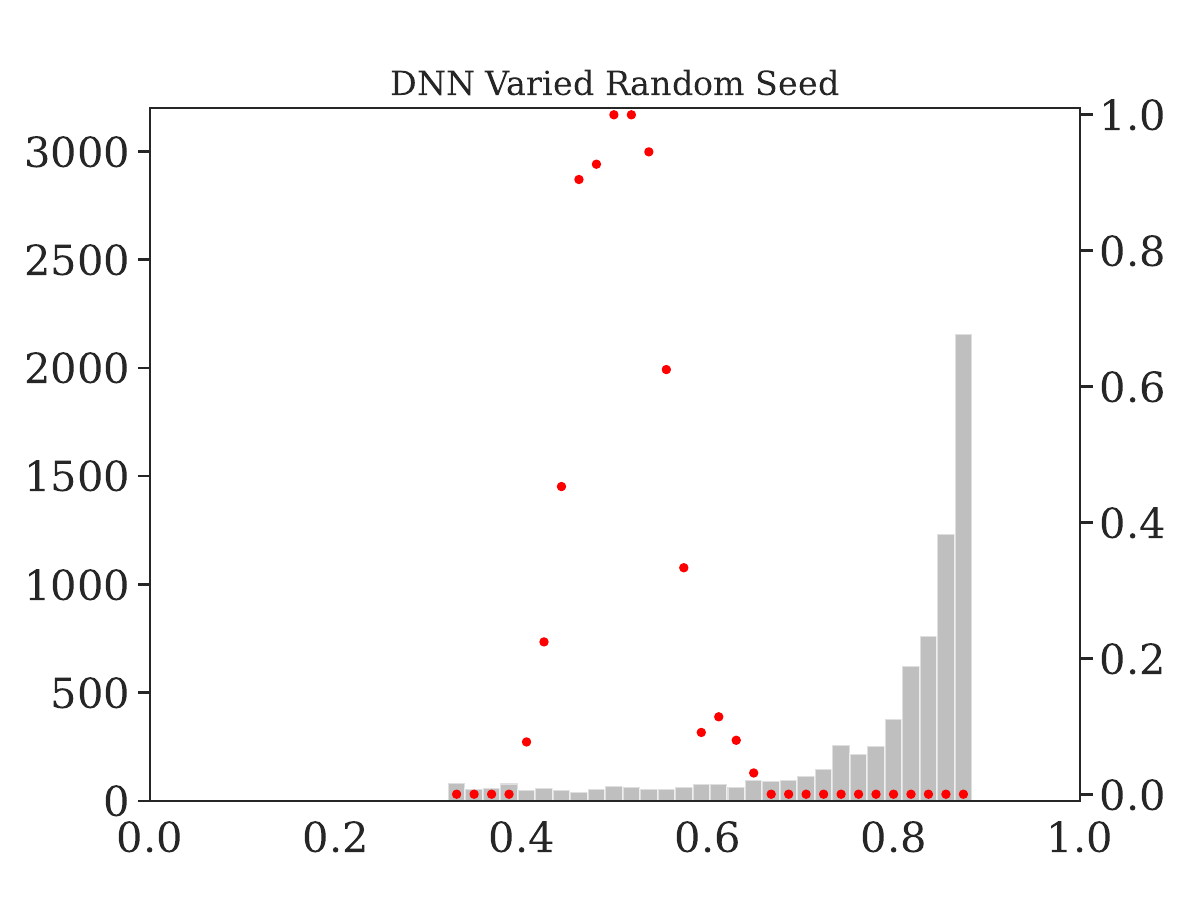} &
     \addPredProbsPlot{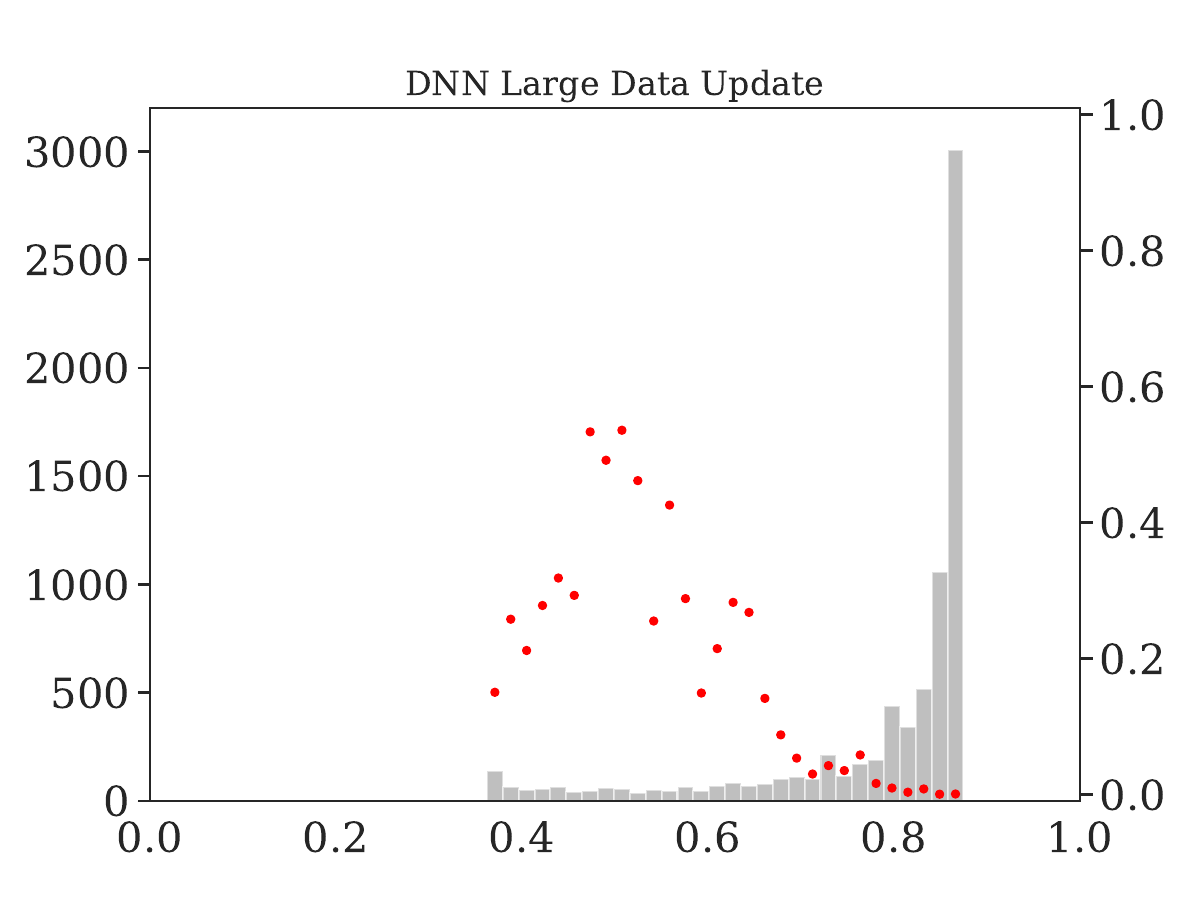} &
     \addPredProbsPlot{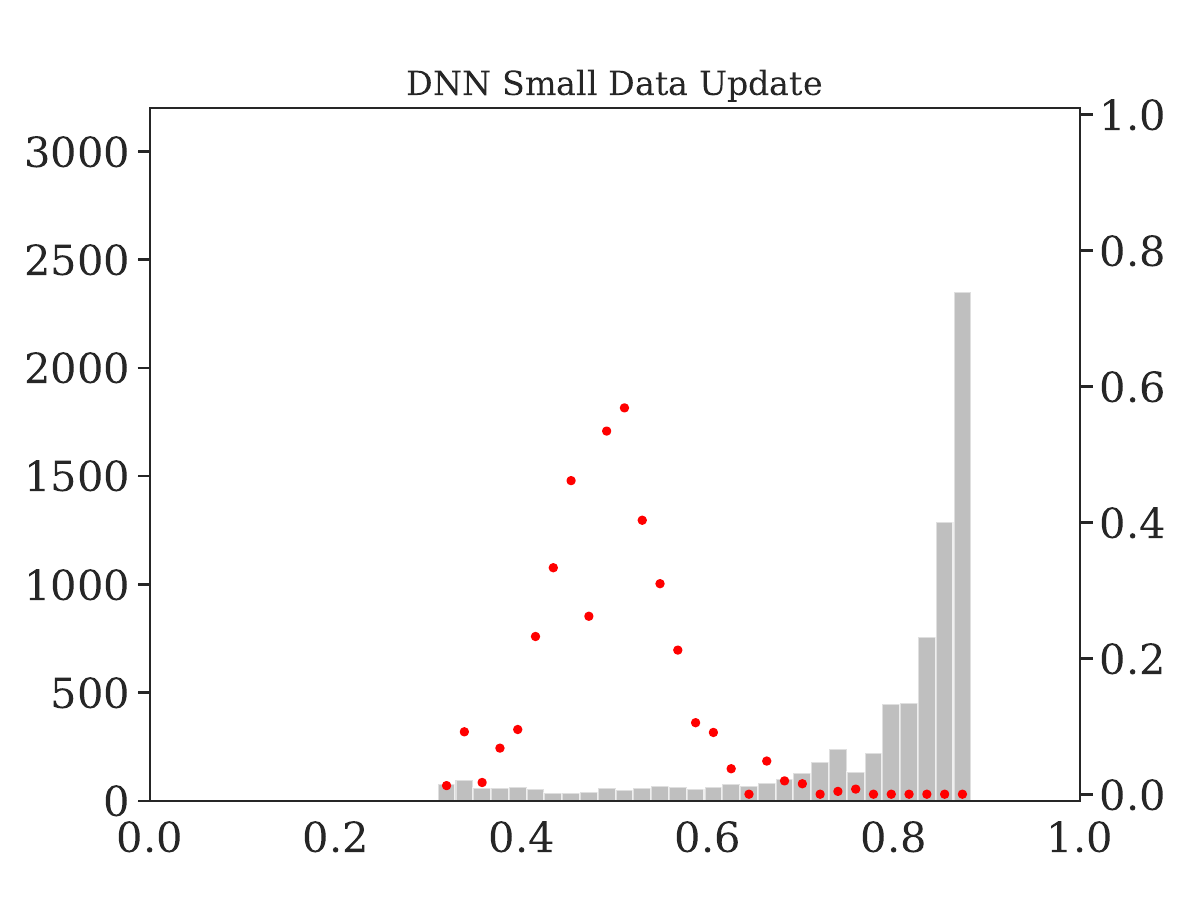}  \\
    \addPredProbsPlot{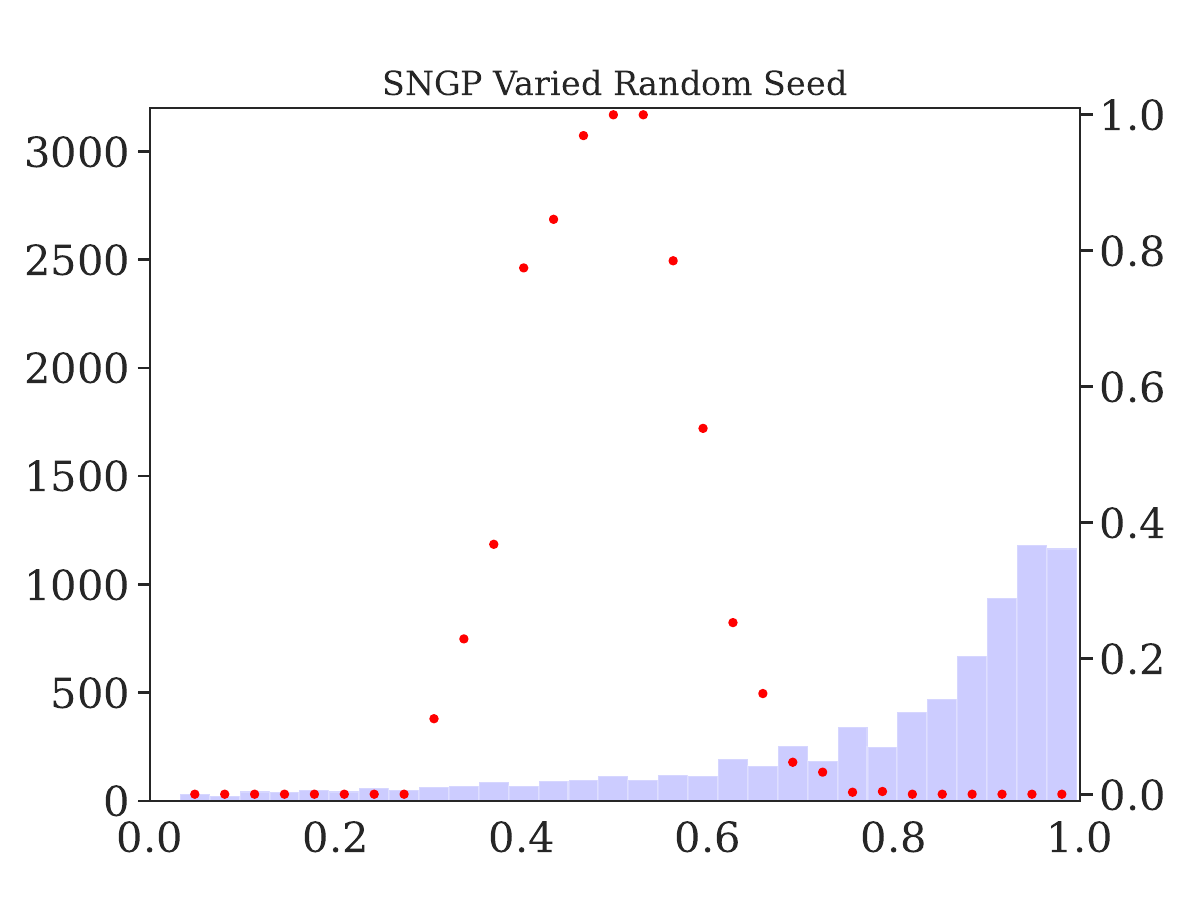} &
     \addPredProbsPlot{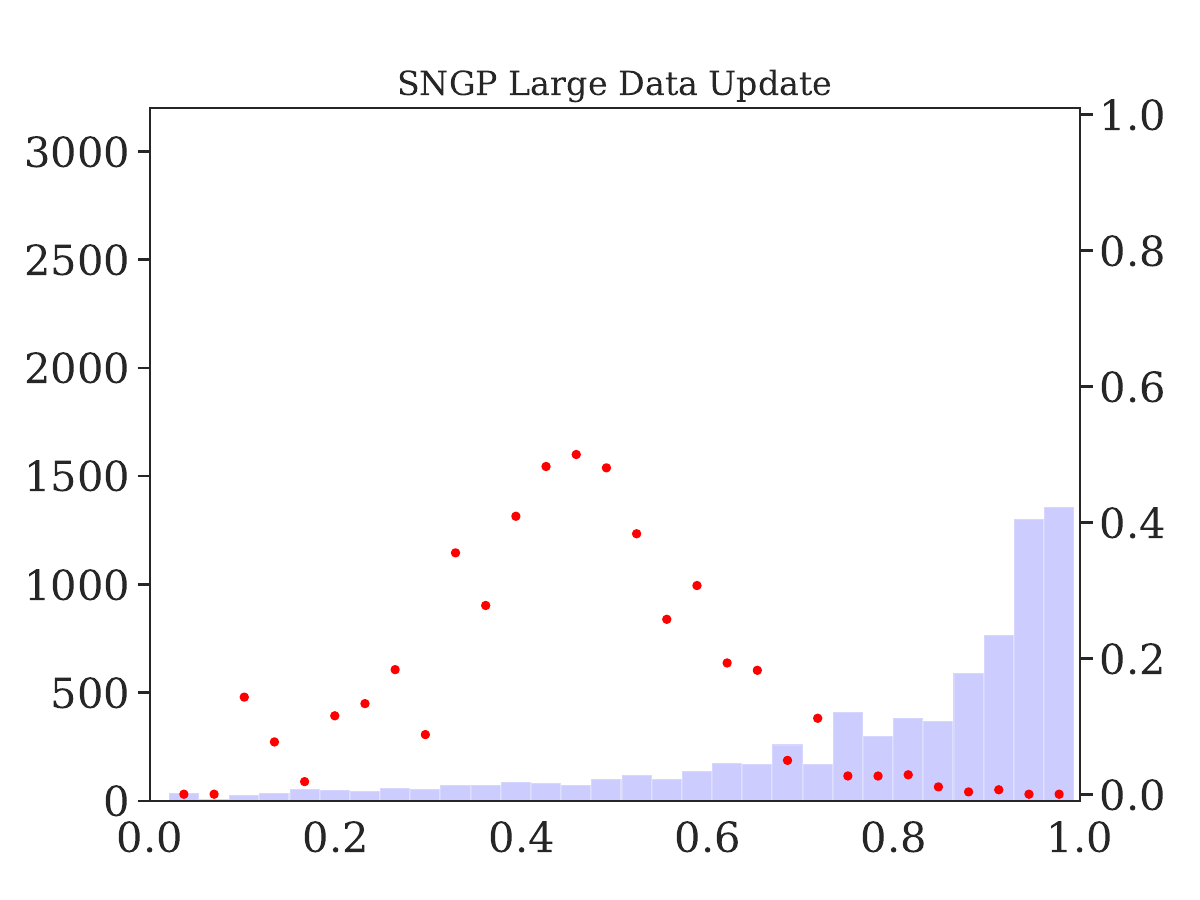} &
     \addPredProbsPlot{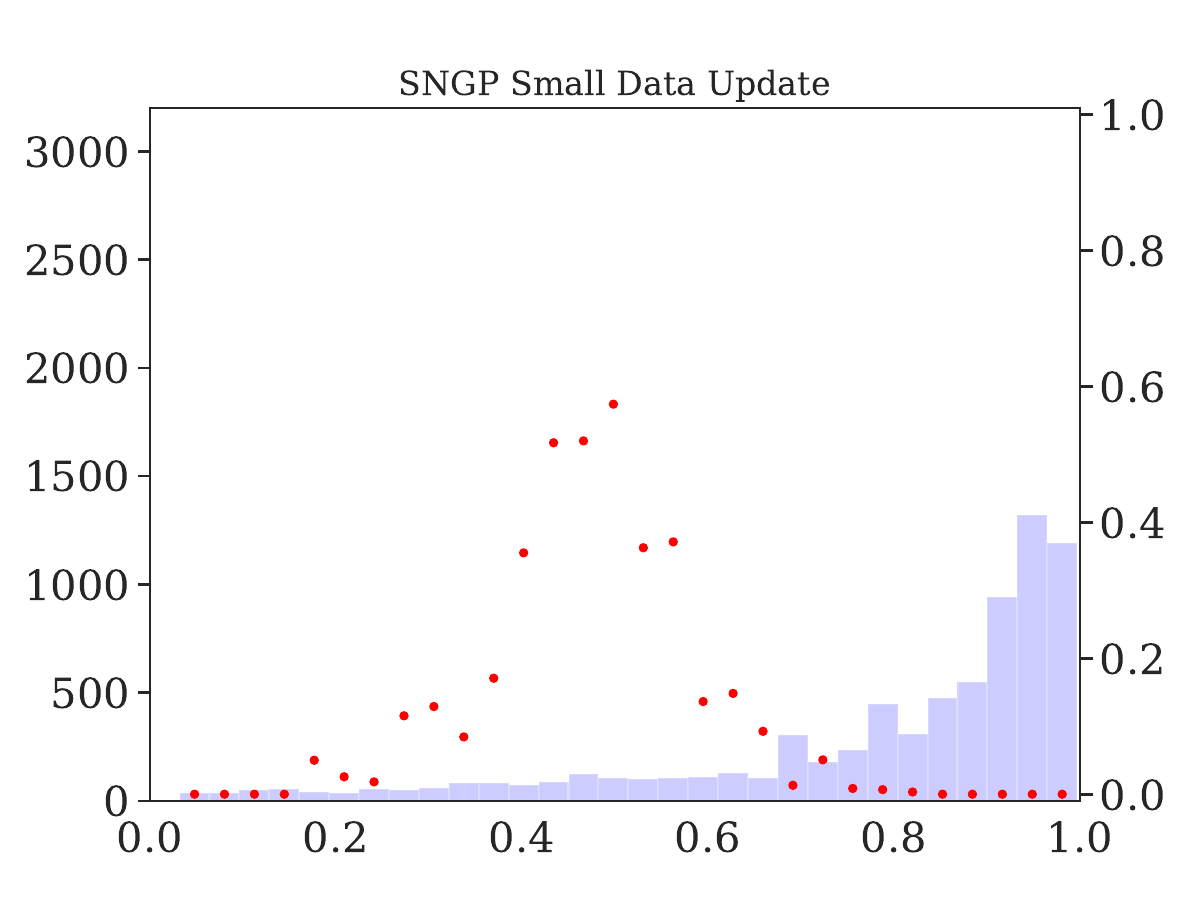} \\
     \end{tabular}%
     }
      \caption{Predicted probability distributions for Credit Dataset.} \label{fig::pred_probs_credit}
\end{figure}

\begin{figure}[ht!]
     \centering
    %  \scriptsize
     \resizebox{\linewidth}{!}{%
     \begin{tabular}{ccc}
     \addPredProbsPlot{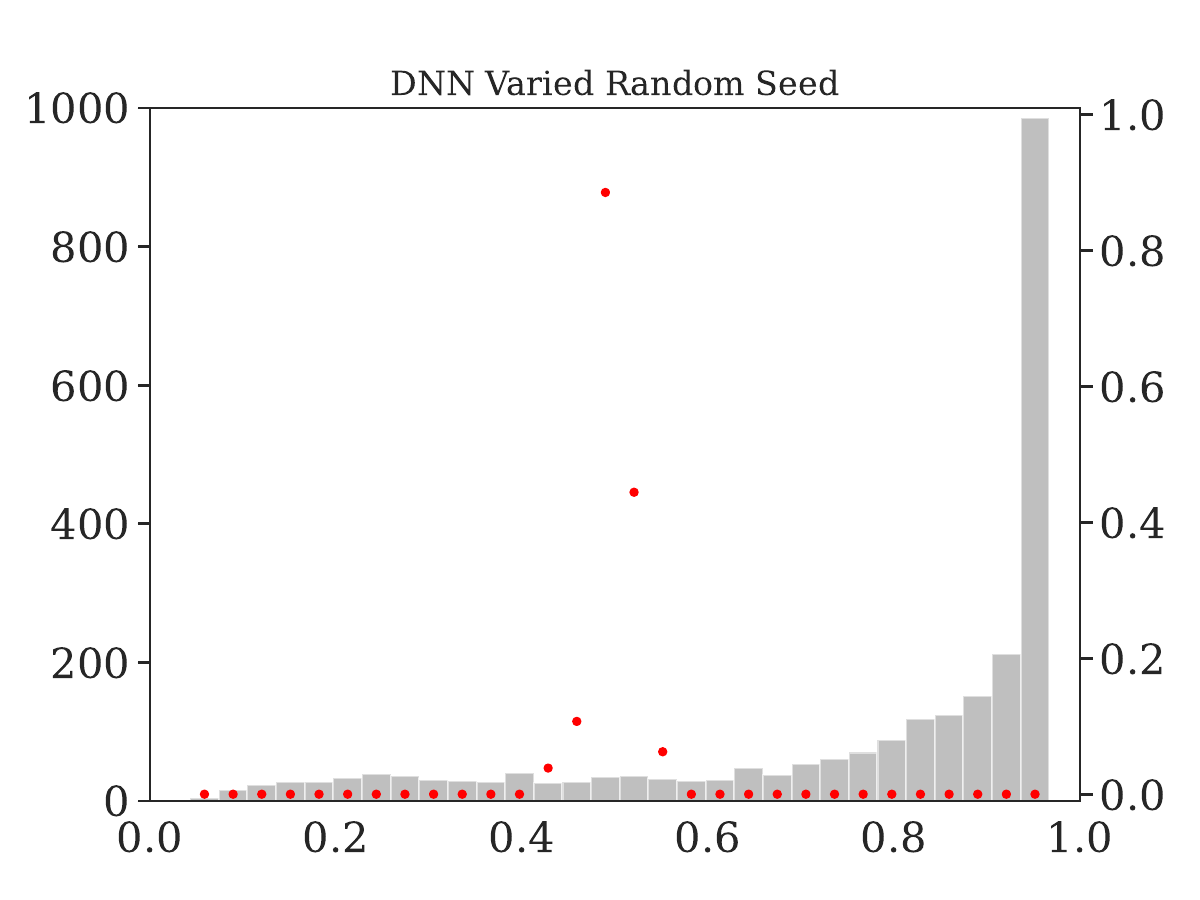} &
     \addPredProbsPlot{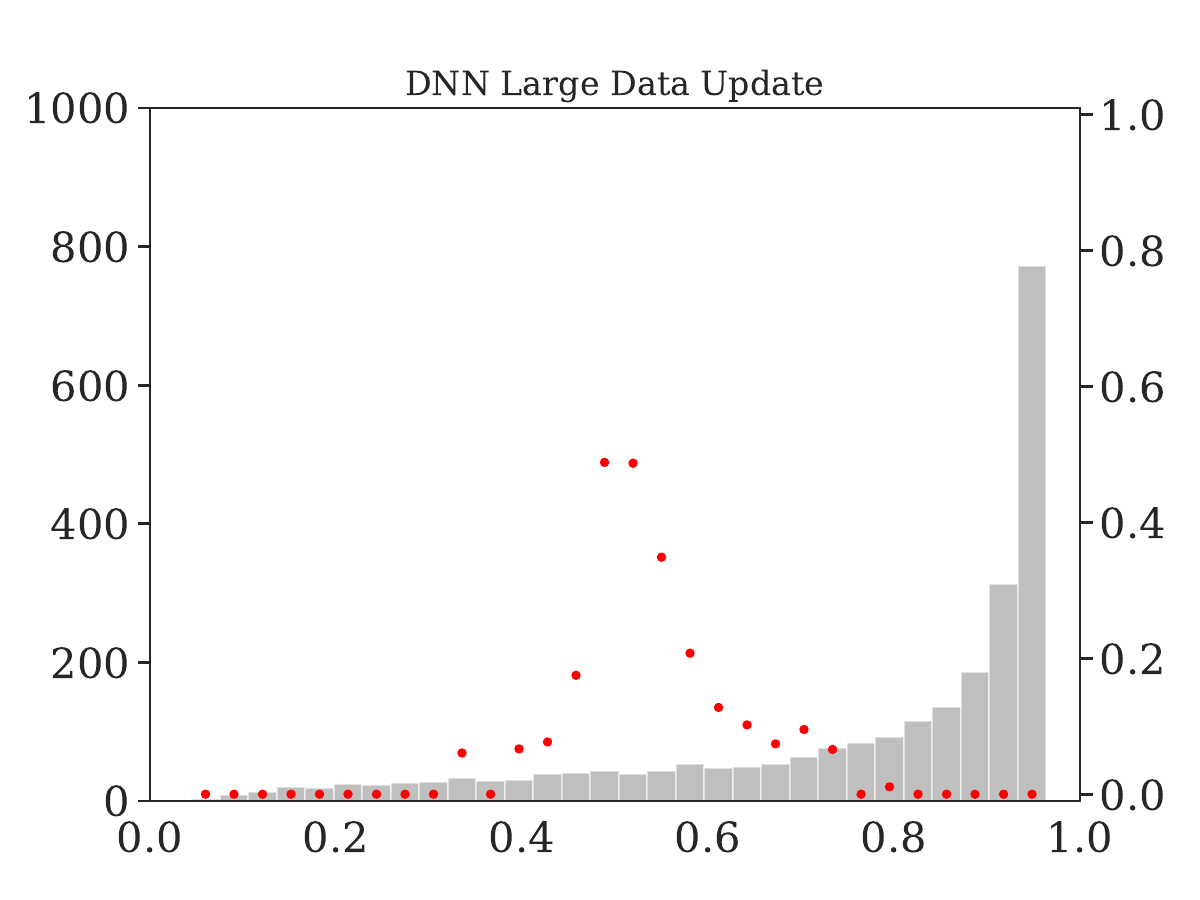} &
     \addPredProbsPlot{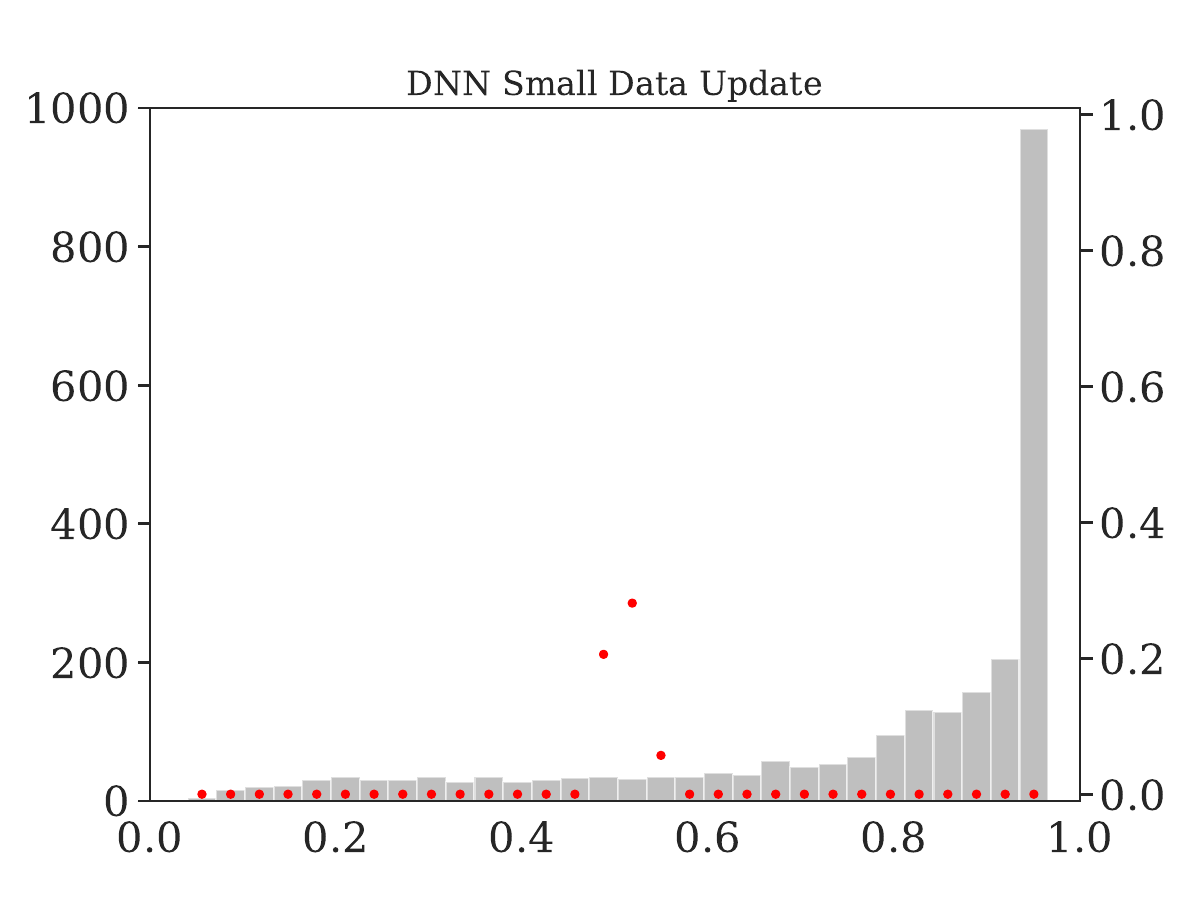}  \\
    \addPredProbsPlot{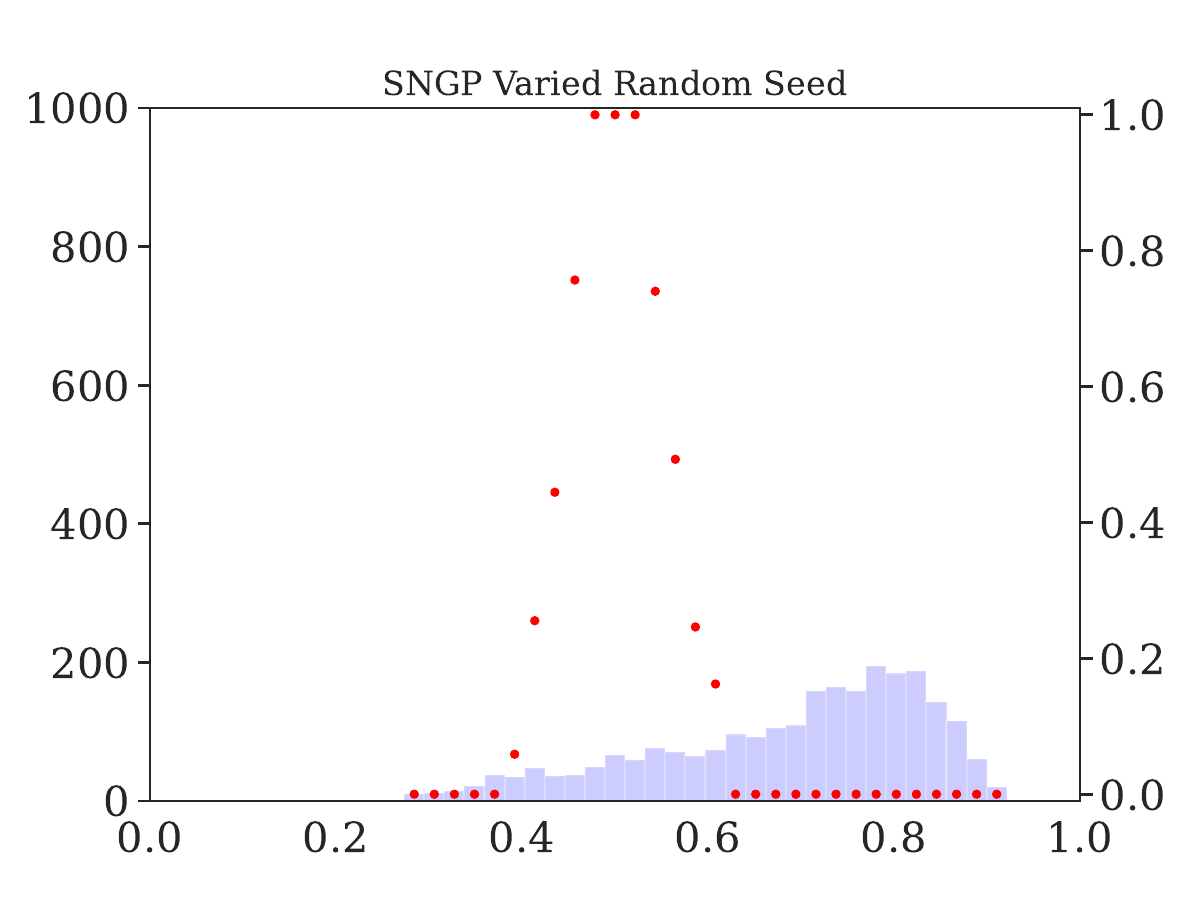} &
     \addPredProbsPlot{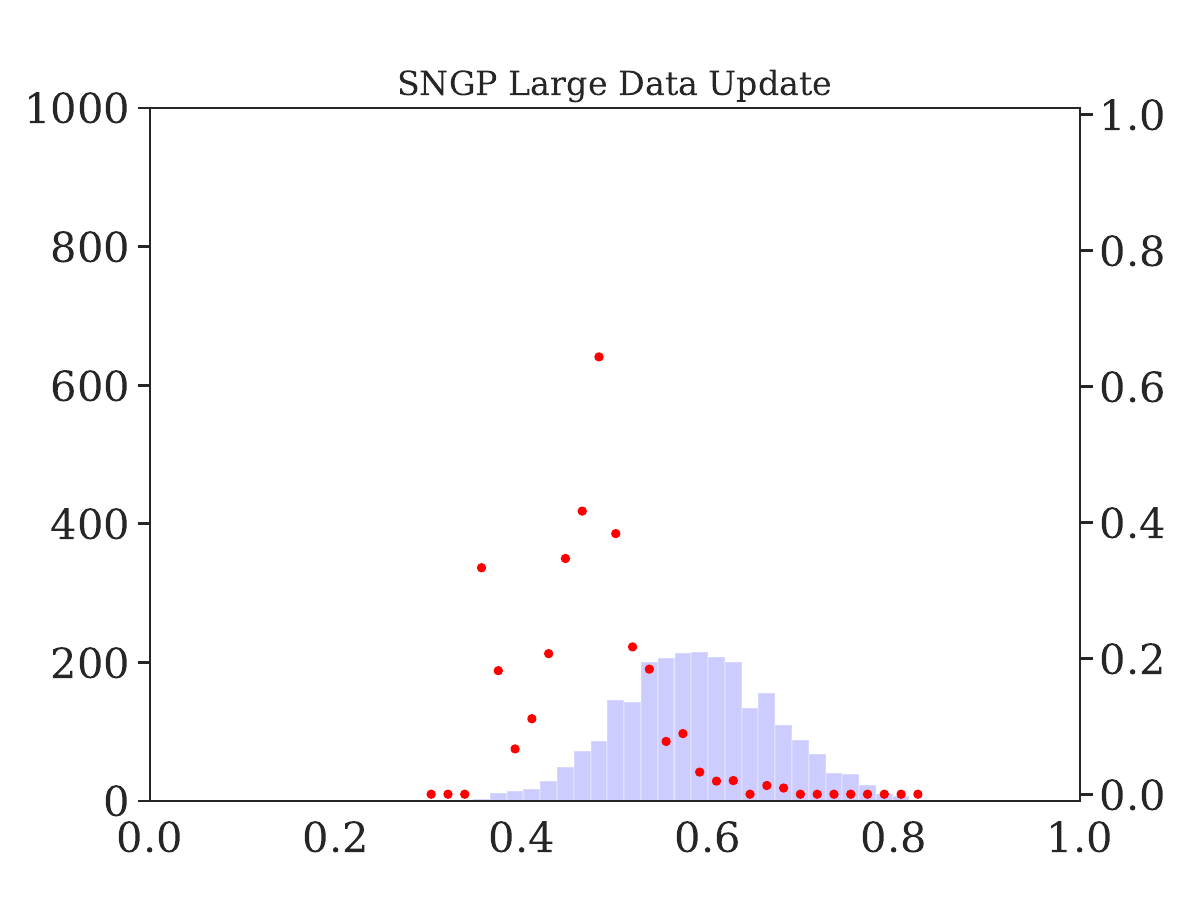} &
     \addPredProbsPlot{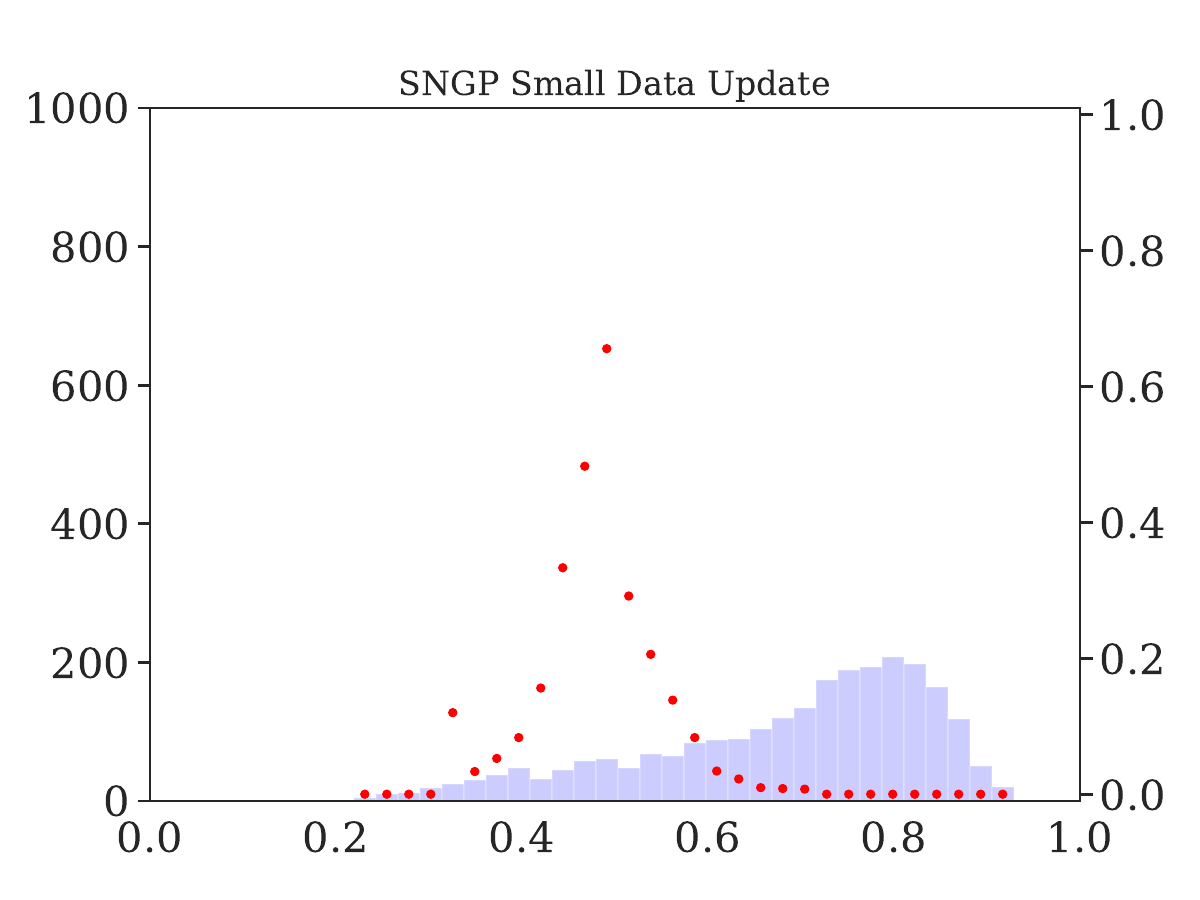} \\
     \end{tabular}%
     }
      \caption{Predicted probability distributions for HDMA Dataset.} \label{fig::pred_probs_hdma}
\end{figure}

\begin{figure}[ht!]
     \centering
    %  \scriptsize
     \resizebox{\linewidth}{!}{%
     \begin{tabular}{cc}
     \includegraphics[]{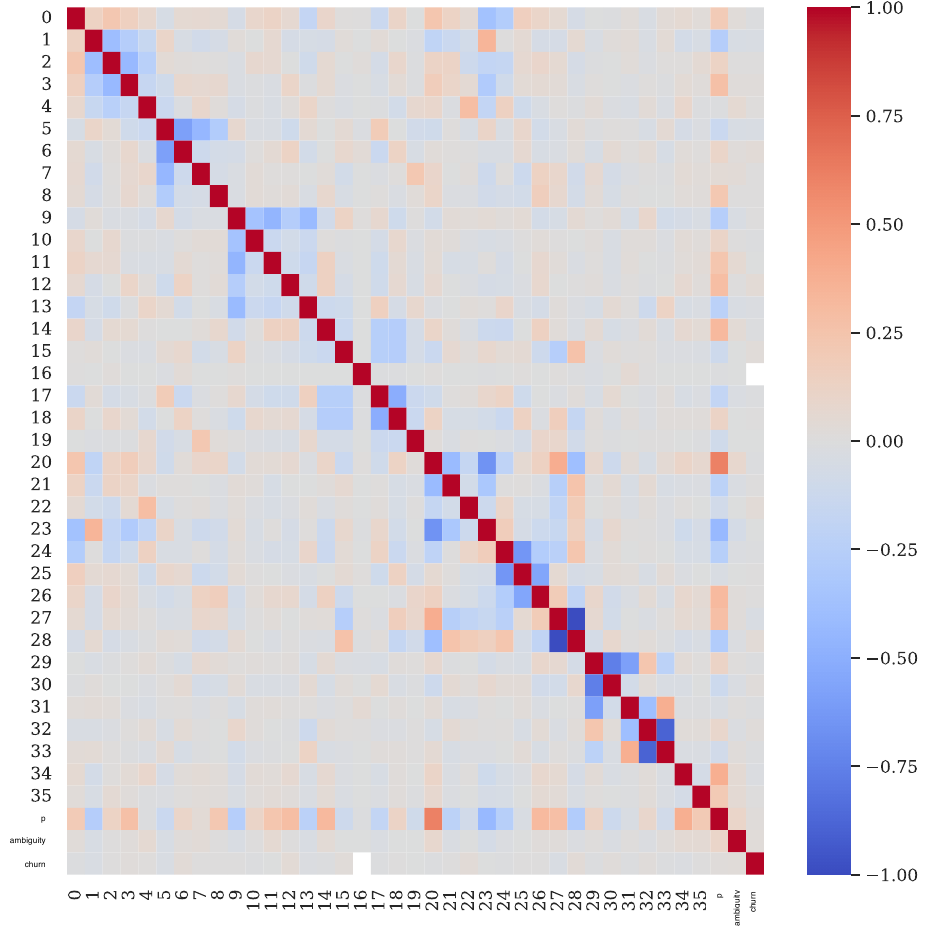} &
     \includegraphics[]{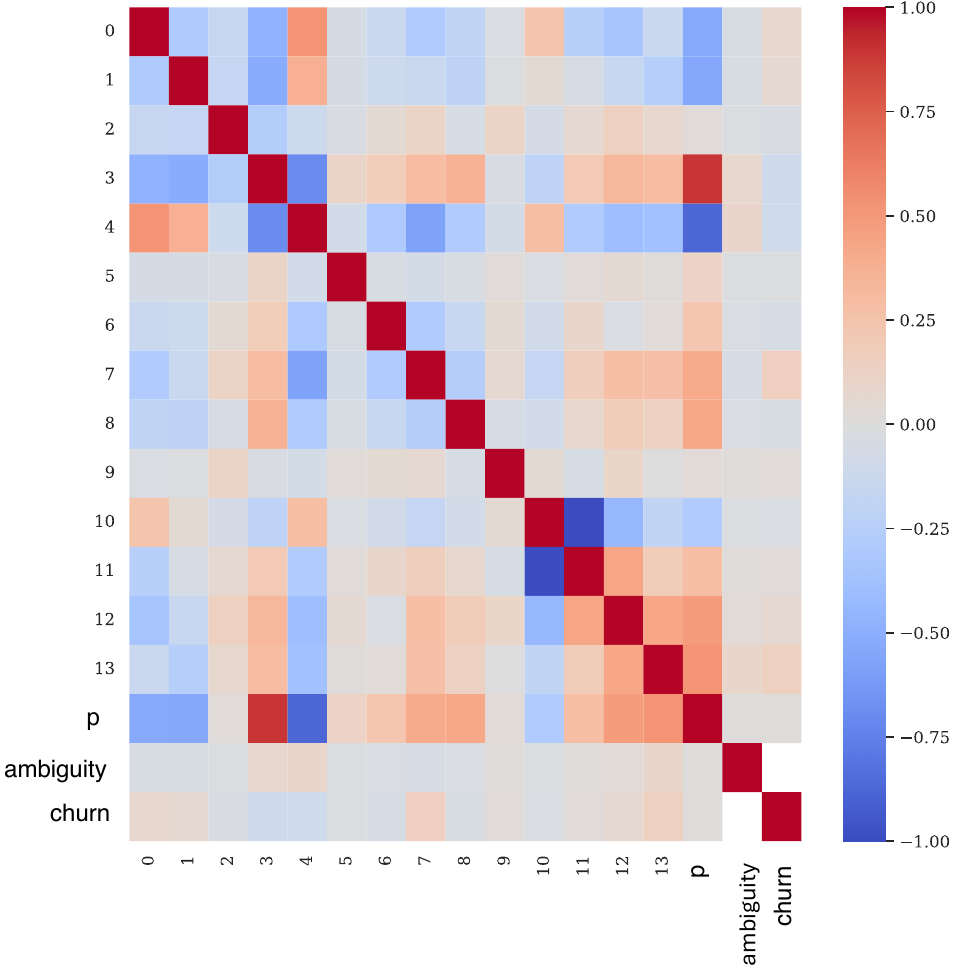}  \\
    \includegraphics[]{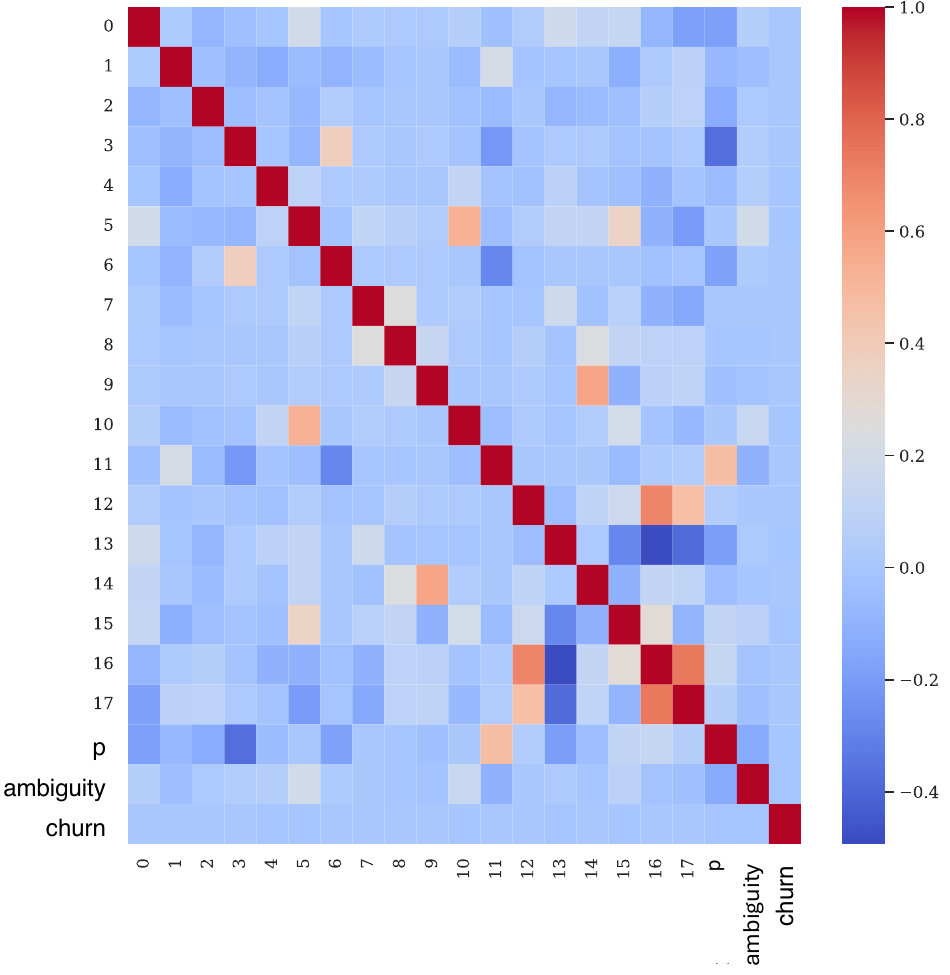} &
     \includegraphics[]{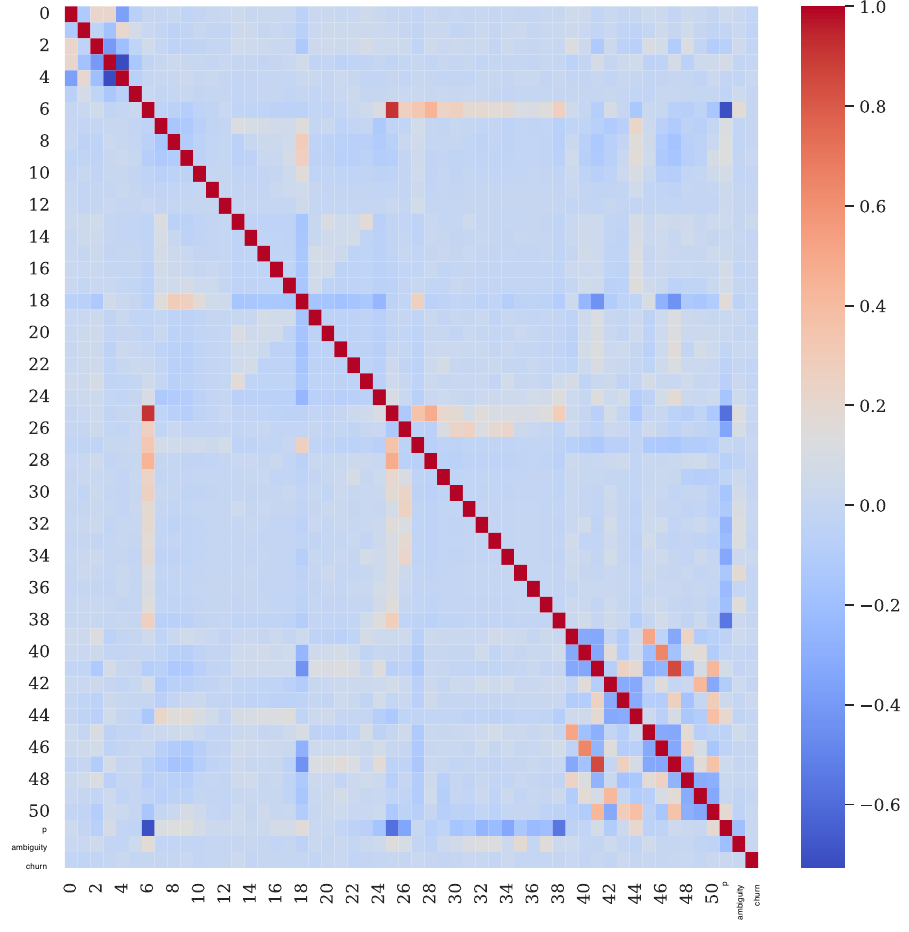}  \\
     \end{tabular}}
      \caption{Pearson correlation between features, predicted probabilities ($p$), ambiguity indiciator and churn indicator. Top left is \textit{adult}, top right is \textit{mammo}, bottom left is \textit{hmda}, bottom right is \textit{credit}. Results shown for DNN model.} \label{fig::corr_DNN}
\end{figure}

\end{document}